

\documentclass[english,11pt]{article}
\textwidth = 15cm
\hoffset = -1cm
\voffset = -2.2cm
\textheight = 23cm




 \usepackage{graphics}
 \usepackage{graphicx}
 \usepackage{epsfig}

\usepackage{amssymb}
\usepackage{amstext}
\usepackage{amsmath}
\usepackage{amsthm}
\usepackage{url}
\usepackage{color}




\newtheorem{example1}{Example} 
\newtheorem{definition}{Definition} 
\newtheorem{theorem}{Theorem} 
\newtheorem{proposition}{Proposition} 

\renewcommand{\O}{\mathcal{O}}

\newcommand{\naf}{\neg}

\newcommand{\KB}{\mathcal K\!\mathcal B}

\newcommand{\CTL}{\mathcal{CT}\!\!\mathcal L}

\newcommand{\per}{\mbox{\bf .}}


\newcommand{\ag}{\mathit{ag}}
\newcommand{\ex}{\mathit{ex}}
\newcommand{\eu}{\mathit{eu}}
\newcommand{\ef}{\mathit{ef}}

\newcommand{\eg}{\textit{eg}}

\newcommand{\tf}{t_{\!f}}

\newcommand{\ISA}{\sqsubseteq}

\newcommand{\AND}{\sqcap}

\newcommand{\SOME}[2]{\exists #1 \per #2}

\begin{document}




\title{Ontology-based Representation and Reasoning on Process Models: A Logic Programming Approach}

 \author{ Fabrizio Smith and Maurizio Proietti \\
National Research Council \\ 
Istituto di Analisi dei Sistemi ed Informatica "Antonio Ruberti" \\
Via dei Taurini 19, 00185, Roma, Italy\\
 \{fabrizio.smith, maurizio.proietti\}@iasi.cnr.it
}
\date{\vspace{-3ex}}
\maketitle

\begin{abstract}
We propose a framework grounded in Logic Programming for representing and reasoning about business processes from both the procedural and ontological point 
of views. In particular, our goal is threefold: (1) define a logical language and a formal semantics for process models enriched with ontology-based annotations; (2) provide an effective inference mechanism that supports the combination of reasoning services dealing with the structural definition of a process model, its behavior, and the domain knowledge related to the participating business entities; (3) implement such a theoretical framework into a process modeling and reasoning platform. 
To this end we define a process ontology coping with a relevant fragment of the popular  BPMN modeling notation. The behavioral semantics of a process is defined as a state transition system by following an approach similar to the Fluent Calculus, and allows us to specify state change in terms of preconditions and effects of the enactment of activities. 
Then we show how the procedural process knowledge can be seamlessly integrated with the domain knowledge specified by using the OWL 2 RL rule-based ontology language. 
Our framework provides a wide range of reasoning services, including CTL model checking, which can be performed by using standard Logic Programming inference engines through a goal-oriented, efficient, sound and complete evaluation procedure. We also present a software environment implementing the proposed framework, and we report on an experimental evaluation of the system, whose results are encouraging and show the viability of the approach.
\end{abstract}

\noindent \textbf {Keywords}: Business Processes, Ontologies, Logic Programming,
Knowledge Representation, Verification.


\section{Introduction}

The adoption of structured and systematic approaches for the management of
Business Processes (BPs) that operate within an organization is constantly gaining popularity,
especially in medium to large organizations such as manufacturing enterprises,
service providers, and public administrations.
The core of such approaches is the development of
BP models that represent the knowledge about processes in machine accessible form.
One of the main advantages of process modeling is that it enables automated analysis facilities,
such as the verification that the requirements specified over the models are enforced.
The automated analysis issue is addressed in the BP Management (BPM)
community mainly from a control flow perspective,
with the aim of verifying whether the behavior of the modeled system presents
logical errors (see, for instance, the notion of soundness \cite{vander-workflow}).

Unfortunately, standard BP modeling languages are not fully adequate to capture
process knowledge in all its aspects. While their focus is on the procedural representation
of a BP as a workflow graph that specifies the planned order of operations, the  domain
knowledge regarding the  entities involved in such a process, i.e., the business
environment in which processes are carried out, is often left implicit. This kind of
knowledge is typically expressed through natural language comments and labels
attached to the models, which constitute very limited, informal and ambiguous
pieces of information.
The lack of a formal representation of the domain knowledge
within process models
is widely recognized as an obstacle for the further automation of BPM
tools and methodologies that
effectively support process analysis, retrieval,
and reuse \cite{semantic-bpm}.



In order to overcome this limitation,
the application of well-established techniques stemming from the area of
Knowledge Representation in the domains of BP modeling
\cite{semantic-bpm,bpkb-iswc,gpo-thesis,beyond-soundness} and Web Services \cite{owl-s,wsmo-book} has been
shown to be a promising approach.
In particular, the use of computational ontologies is the most established approach for
representing in a machine processable way the
knowledge about the domain where business processes operate, providing formal
definitions for the basic entities involved in a process, such as activities, actors,
data items, and the relations between them. However, there are still several open issues
regarding the combination of BP modeling languages (with their execution
semantics) and ontologies, and the accomplishment of behavioral reasoning tasks
involving both these components. Indeed, most of the approaches developed for 
the semantic enrichment of process models or Web Services (such as the above cited
ones) do not provide an adequate model theory nor an axiomatization to capture and 
reasoning on dynamic aspects of process descriptions. 
On the other hand, approaches based on action languages developed in AI (e.g., \cite{golog-owl-s,swso,situation-calc-owl-s}) are very expressive formalisms that can be used to simultaneously capture the process and the domain knowledge, but they are too general to be applied to BP modeling, and must be suitably restricted not only  towards decidability of reasoning but also to reflect the peculiarities of processes. Indeed, action languages  provide a limited support for process definition, in terms of workflow constructs, and they lack a clear mapping from standard (ontology and process) modeling languages.

The main objective of this paper is to design a framework for representing and reasoning
about business process knowledge from both the procedural
and ontological point of views. 
To achieve this goal, we do {\em not} propose yet
another business process modeling language, but we provide a 
framework based on Logic Programming (LP) \cite{lloyd}
for reasoning about process-related knowledge expressed by means of 
de-facto standards for BP
modeling, like  BPMN \cite{bpmn}, and ontology definition, like OWL \cite{owl}.
We define a rule-based procedural semantics for a relevant
fragment of BPMN, by following
an approach inspired by the {\em Fluent Calculus} \cite{fluent-calculus-survey}, and we extend it in order to take into account OWL annotations
that describe preconditions and effects of activities and events occurring within a BP. In particular,
we integrate our procedural BP semantics with the  OWL 2 RL profile  thanks to a common grounding in LP.  OWL 2 RL is indeed a fragment of the OWL ontology language that has a suitable rule-based presentation, thus constituting an excellent compromise between expressivity and efficiency.

The contributions of this paper can be summarized as follows.

After presenting the preliminaries in Section \ref{section:Prelim}, we propose, in Section~\ref{section:BPS},  a revised and extended version of the  Business Process Abstract Language (BPAL) \cite{bpal-dexa2010,bpal-icaart}, a process ontology for modeling the procedural semantics of a BP
regarded as a workflow. To this end we introduce an axiomatization  to cope with a relevant fragment of the BPMN 2.0 specification,
allowing us to deal with a large class of process models.

We then propose, in Section~\ref{section:sem-annotation},
an approach for the semantic annotation of BP models,
where BP elements are described by using an OWL 2 RL
ontology.

In Section~\ref{section:temporal-reasoning}
we provide a general verification mechanism by integrating the
temporal logic CTL \cite{clarke} within our framework, in order to analyze properties of  the \textit{states} that the system can reach,  by taking into account both the control-flow and the semantic annotation.

In Section \ref{section:reasoning} we show how a repository of semantically enriched
BPs can be organized in a Business Process Knowledge Base (BPKB), which, due to the common representation of its components in LP, provides a uniform and formal
framework that enables logical inference. We then discuss how,
by using state-of-the-art LP systems,
we can perform some very sophisticated reasoning tasks, such as verification, querying
and trace compliance checking,  that combine both the procedural and the 
domain knowledge relative to a BP.

 In Section \ref{sect:proof} we provide the computational characterization of the reasoning services that can be performed on top of a BPKB, showing in particular that, for a large class of them, advanced resolution strategies (such as SLG-Resolution \cite{tabling}) guarantee an efficient, sound and complete procedure.

In Section \ref{sect:implementation} we describe the implemented tool, which provides a
graphical user interface to support the semantic BP design,
and a reasoner, developed in XSB Prolog \cite{xsb12}, able to operate on the BPKB.  We also report on an evaluation of the system performance, demonstrating that complex reasoning tasks can be
performed on business process of small-to-medium size in an acceptable amount of
time and memory resources.

In Section \ref{sect:rel_work} we compare our work to related approaches
and in the concluding section we give a critical discussion of our approach, 
along with directions for future work. 
\section{Preliminaries}
\label{section:Prelim}

\noindent 
In order to clarify the terminology and the notation used throughout this paper, in this section we recall  some 
basic  notions  related to the BPMN notation \cite{bpmn},  
Description  Logics  \cite{DLhandbook} as well as foundations of the  OWL  \cite{owl} standard, and
Logic  Programming  \cite{lloyd}.

\subsection{BPMN}
Business Process Modeling and Notation (BPMN) \cite{bpmn} is a graphical language for BP modeling, standardized by the OMG (\url{http://www.omg.org}). The primary goal of BPMN is to provide a standard notation readily understandable by all business stakeholders, which  include the business analysts who create and refine the processes, the technical developers responsible for their implementation, and the business managers who monitor and manage them. 

A BPMN model is defined through a Business Process Diagram (BPD), which is a kind of flowchart incorporating constructs to represents the control flow, data flow, resource allocation (i.e., how the work is assigned to the participants), and exception handling (i.e., how erroneous behavior can be handled and compensated). We will briefly overview the core BPMN constructs referring to the example in Figure \ref{fig:example}. 

The constructs of BPMN are classified as flow objects, artifacts, connecting objects, and swimlanes.  

\textit{Flow objects} are partitioned into \textit{activities} (represented as rounded rectangles), \textit{events} (represented as circles), and \textit{gateways} (represented as diamonds).  Activities are a generic way of representing some kind of work performed within the process, and can be \textit{tasks} (i.e., atomic activities such as \textit{create\_order}) or \textit{compound activities} corresponding to the execution of entire sub-processes (e.g., \textit{create\_order}). Events denote something that ``happens'' during the enactment of a business process, and are classified as  \textit{start events}, \textit{intermediate events}, and \textit{end events} which can start (e.g., $s$), suspend (e.g., $ex$), or end (e.g., $e$) the process enactment. An intermediate event, such as \textit{ex}, attached to the boundary of an activity models exception handling.  Gateways model the branching and merging of the control flow. There are several types of gateways in BPMN, each of which may be used as a \textit{branch} gateway if it has multiple outgoing flows, or  a \textit{merge} gateway if it has multiple incoming flows. The split and join behavior depends on the semantics associated to each type of gateway. Exclusive branch gateways (e.g., \textit{g1}) are decision points where exactly one of a set of mutually exclusive alternative flows is selected, while an exclusive merge gateway (e.g., \textit{g2}) merges two incoming flows into a single one.  Parallel branch gateways (e.g., \textit{g7}) create parallel threads of execution, while parallel merge gateways (e.g., \textit{g8}) synchronize concurrent flows. Inclusive branch gateways (e.g., \textit{g3}) are decision points where at least one of a set of non-exclusive alternative flows is selected, while an inclusive merge gateway (e.g., \textit{g4}) is supposed to be able to synchronize a varying number of threads, i.e., it is executed only when at least one of its predecessors has been executed and no other will be eventually executed\footnote{For sake of completeness, BPMN provides two more types of gateways, which we do not exemplify, namely, the event-based and  the complex gateway.}.

\textit{Connecting objects} are \textit{sequence flows} (e.g., the directed edge between \textit{g1} and \textit{g3}) and \textit{associations} (e.g., the dashed edge between \textit{create\_order} and \textit{order}). A sequence flow links two flow objects and   denotes a control flow relation, i.e., it states that the control flow can pass from the source to the target object. An \textit{association} is used to associate \textit{artifacts} (i.e., data objects) with flow objects,  and its direction defines if a data object is used as an input  (e.g., \textit{order} is an input of \textit{accept\_order})  or it is an output (e.g., \textit{order} is an output of \textit{create\_order}) of some flow element.

\textit{Swimlanes} are used to model  participants, i.e., a generic notion representing a role within a company (e.g., Sales Clerk), a department (e.g., Finance) or
a business partner (e.g., Courier), which is assigned to the execution of a collection of activities.
\begin{center}
\begin{figure} [h]
\centering
  \includegraphics[width=15cm]{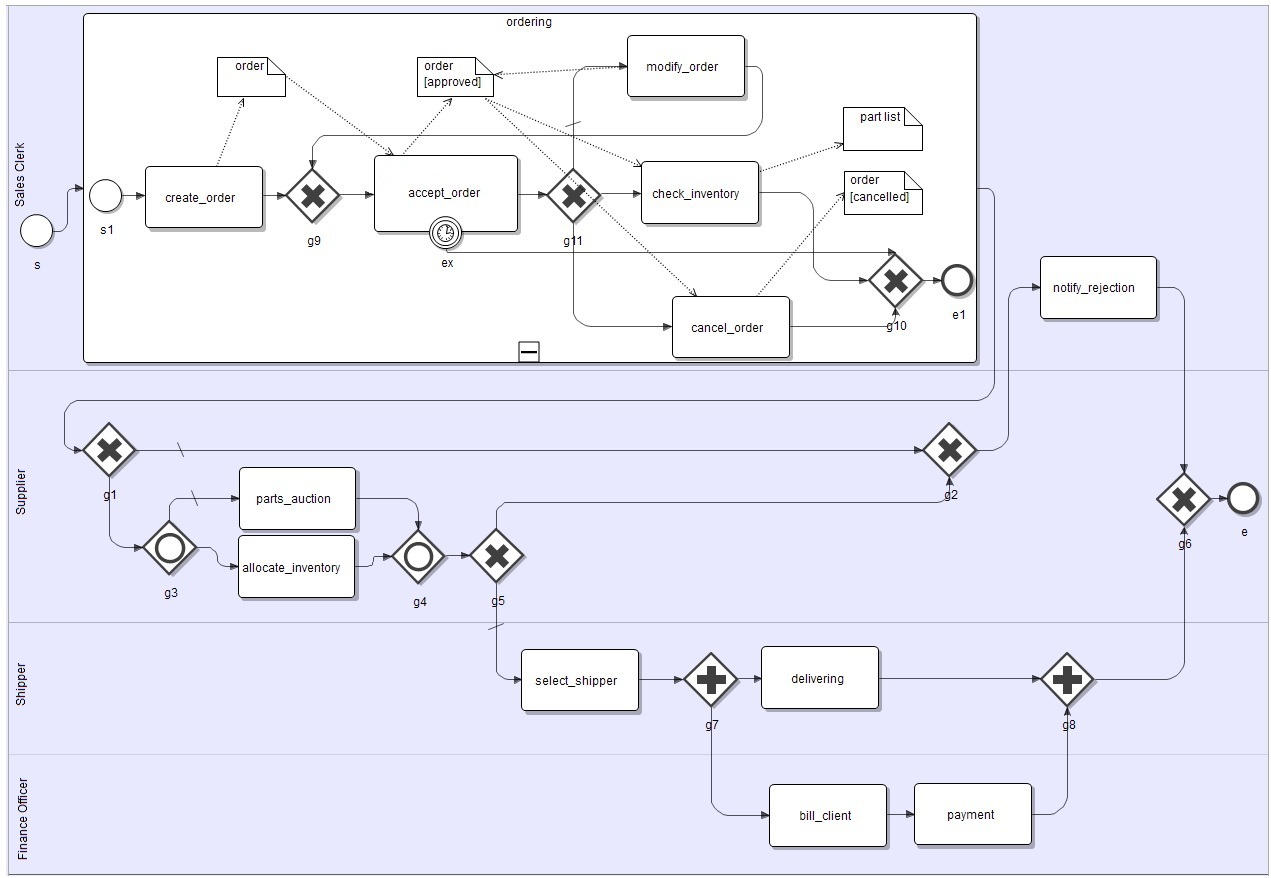}
	\caption{Handle Order Business Process}
	\label{fig:example}
\end{figure}	
\end{center}

\subsection{Description Logics and Rule-based OWL Ontologies}
\label{sect:owl}
Description Logics (DLs) \cite{DLhandbook}  are a family of knowledge representation languages that can be used to represent the knowledge of an application domain in a structured and formally well-understood way. DLs are typically adopted for the definition of ontologies since on the one hand, the important notions of the domain are described by  \textit{concept descriptions}, i.e., expressions that are built from atomic concepts (usually thought as sets of individuals, e.g., $Person$) and atomic roles (relations between concepts, e.g., $worksFor$) using the concept and role constructors provided by the particular DL (e.g., $ Person \AND \SOME{worksFor}{Company}$, that is, the set of \textit{persons} who work for a \textit{company}). On the other hand, DLs correspond to decidable fragments of classical first-order logic (FOL), and thus are equipped with a formal, logic-based semantics that makes such languages suitable for automated reasoning. 

\begin{table}[htbp]
\begin{center}
\caption{Main OWL statements and FOL equivalence}
\label{tab:owl2fol}
{\small 
\begin{tabular}{|l|c|c|} \hline 
\textbf{\textit{OWL Axiom}} & \textbf{\textit{DL Expression}} & \textbf{\textit{FOL Formula}}  \\ \hline 
\textit{a type C} & $a : C$ & $C(a)$  \\ \hline 
\textit{a P b} & $(a,b) : P$ & $P(a,b)$ \\ \hline 
\textit{C subClassOf D} & \textit{C }$\sqsubseteq \ $\textit{D} & $\forall \textit{x.}C(x) \to D(x)$  \\ \hline 
\textit{C disjointWith D} & \textit{C }$\sqsubseteq \neg$\textit{D} & $\forall $\textit{x.}$C(x) \to \neg D(x)$  \\ \hline 
\textit{P domain C} & $\top$ \textit{ }$\sqsubseteq $\textit{ }$\forall $\textit{P${}^{-}$.C} & $\forall \textit{x,y.}P(x,y) \to C(x)$  \\ \hline 
\textit{P range C} & $\top$ \textit{ }$\sqsubseteq $\textit{ }$\forall $\textit{P.C} & $\forall \textit{x,y.}P(x,y) \to C(y)$  \\ \hline 
\textit{transitiveProperty P} & \textit{P${}^{+}$ }$\sqsubseteq \ P$\textit{} & $\forall \textit{x,y,z.}(P(x,y) \wedge P(y,z)) \to P(x,z)$  \\ \hline 
\textit{functionalProperty P} & $\top$ \textit{ }$\sqsubseteq $\textit{ }$\le $1\textit{ P} & $\forall \textit{x,y,z.}(P(x,y) \wedge P(x,z)) \to y=z$  \\ \hline 
\textit{P inverseOf Q} & \textit{P}$\ \equiv $\textit{ Q${}^{-}$} & $\forall \textit{x,y.}P(x,y) \leftrightarrow Q(y,x)$  \\ \hline 
\textbf{\textit{OWL Constructor}}\textit{} & \multicolumn{2}{|p{2.8in}|}{\textit{}}  \\ \hline 
\textit{C intersectionOf D} & \textit{C }$\sqcap \ $\textit{D} & $C(x) \wedge D(x)$  \\ \hline 
\textit{C unionOf D} & \textit{C }$\sqcup \ $\textit{D} & $C(x) \vee D(x)$  \\ \hline 
\textit{P allValuesFrom C} & $\forall $\textit{P.C} & $\forall \textit{y.}P(x,y) \to C(y)$  \\ \hline 
\textit{P someValuesFrom C} & $\exists $\textit{P.C} & $\exists \textit{y.}P(x,y) \wedge C(y)$  \\ \hline 
\textit{complementOf D} & $\neg$\textit{D} & ${\rm \ }\neg D(x)$  \\ \hline 
\end{tabular}
}
\end{center}
\end{table}

Typically, Description Logics are used for representing a TBox (terminological box) and the ABox (Assertional Box). The
TBox describes concept (and role) hierarchies, (e.g., $Employee \ISA Person \AND \SOME{worksFor}{Company}$), while the ABox contains
assertions about individuals (e.g., $john  : Employee$). 

The growing interest in the Semantic Web vision \cite{semanticweb}, where Knowledge Representation techniques are adopted to  make resources machine-interpretable by ``intelligent agents'', has  pushed the standardization of languages  for ontology and meta-data sharing over the (semantic) web. Among these, one of the most promising standards is the Ontology Web Language (OWL) \cite{owl}, formally grounded in DLs, proposed by the Web Ontology Working Group of W3C. OWL is \textit{syntactically} layered on RDF \cite{rdf} and RDFS \cite{rdfs}, and can be considered as an extension of RDFS in terms of modeling capabilities and reasoning facilities. The underlying data model (derived from RDF) is based on  statements (or RDF triples) of the form $<\!subject, property, object\!>$, which allow us to describe a resource (\textit{subject}) in terms of named relations (\textit{properties}). Values of named relations (i.e. objects) can be URIrefs of Web resources or literals, i.e., representations of data values (such as integers and strings).

Table \ref{tab:owl2fol} shows, for some OWL statements,
the corresponding DL notations and FOL formulae, where \textit{C} and \textit{D} are concepts (OWL classes), \textit{P} and \textit{Q} are roles (OWL properties), \textit{a} and \textit{b} are constants, and \textit{x} and \textit{y} are variables. 

The recent OWL 2 specification defines  profiles that correspond to syntactic subsets of OWL, each of which is designed to trade some expressive power for efficiency of reasoning.  In particular, we consider OWL 2 RL, closely related to the Horn fragment of FOL, which is based on \textit{Description Logic Programs} \cite{dlp} and \textit{pD*} \cite{pd*}.
The use of OWL 2 RL allows us to take advantage of the 
efficient resolution strategies developed for logic programs, in order to perform the 
reasoning tasks typically supported  by Description Logics reasoning systems, such as 
concept subsumption and ontology consistency.
Indeed, the semantics of OWL 2 RL is defined through a partial axiomatization of the OWL 2 RDF-Based Semantics in the form of first-order implications (OWL 2 RL/RDF rules), and constitutes an upward-compatible extension of RDF and RDFS. 

OWL 2 RL  ontologies are modeled by means of the ternary predicate $t(s,p,o)$ 
representing an OWL statement  with subject \textit{s}, predicate \textit{p} and object 
\textit{o}. For instance, the assertion $t(\textit{a},\textit{rdfs:subClassOf},\textit{b})$
represents the inclusion axiom \textit{a} $\sqsubseteq $ \textit{b}. 
Reasoning on triples is supported by OWL 2 RL/RDF rules of the
form $t(s,p,o) \leftarrow t(s_1,p_1,o_1) \wedge \dots \wedge t(s_n,p_n,o_n)$. Table \ref{tab:owl-rl} shows some of the rules of the OWL 2 RL/RDF rule-set.
According to the terminology we will introduce in the next section, this rule set is a {\em definite logic program}.

\begin{table}[htbp!]
\begin{center}
\caption{Excerpt of the OWL 2 RL/RDF rule-set}
\label{tab:owl-rl}
\small{
\begin{tabular}{|l|l|}
\hline
Transitive  & $t(C_1,\textit{rdfs:subClassOf},C_3)\leftarrow t(C_1,\textit{rdfs:subClassOf},C_2)\wedge $\\
subsumption & \hspace*{5mm}$t(C_2,\textit{rdfs:subClassOf},C_3)$\\ 
\hline
Inheritance & $t(X,\textit{rdf:type},C_2)\leftarrow t(C_1,\textit{rdfs:subClassOf},C_2)\wedge t(X,\textit{rdf:type},C_1)$  \\

 & $t(X,\textit{rdf:type},C_2)\leftarrow t(C_1,\textit{owl:equivalentClass},C_2)\wedge t(X,\textit{rdf:type},C_2)$ \\
\hline
Domain & $t(X,\textit{rdf:type},C)\leftarrow t(P,\textit{rdfs:domain},C)\wedge t(X,P,O)$ \\
\hline
Range & $t(Y,\textit{rdf:type},C)\leftarrow t(P,\textit{rdfs:range},C)\wedge t(S,P,Y)$ \\
\hline
Transitivity & $t(X,P,Z)\leftarrow t(P,\textit{rdf:type},\textit{owl:TransitiveProperty})\wedge t(X,P,Y) \wedge t(Y,P,Z)$ \\
\hline
Subsumption & $t(C_1,\textit{rdfs:subClassOf},C_2) \leftarrow t(C_1,\textit{owl:someValuesFrom},D_1) \wedge $ \\ 
of existential & \hspace*{5mm}$t(C_1,\textit{owl:onProperty},P) \wedge t(C_2,\textit{owl:someValuesFrom},D_2)\wedge   $\\
 formulae & \hspace*{5mm}$t(C_2,\textit{owl:onProperty},P) \wedge t(D_1,\textit{rdfs:subClassOf},D_2)$  \\
\hline
Intersection & $t(C, \textit{rdfs:subClassOf}, D) \leftarrow t(C, \textit{owl:intersectionOf}, I) \wedge D \in I$  \\
\hline
Disjointness  &  $\bot \leftarrow t(C_1,\textit{owl:disjointWith},C_2) \wedge t(X,\textit{rdf:type},C_1) \wedge t(X,\textit{rdf:type},C_2)$ \\
\hline
\end{tabular}
}
\end{center}

\end{table}

\subsection{Logic programming}
 
\label{subsec:lp}

We briefly recall the basic notions of  Logic Programming.
In particular, we will consider the class of {\em locally stratified logic programs},
or {\em stratified programs}, for short,
and their standard semantics defined by the {\em perfect model}.
(Recall that all major declarative semantics of logic programs
coincide on stratified programs.)
This class of logic programs is expressive enough
to represent several complementary pieces of knowledge related to business processes,
such as the syntactic structure of the control flow, the operational semantics, 
the ontology-based properties, and the temporal properties of the execution.
For more details about LP we refer to \cite{lloyd,lp-negation}.

\medskip

A {\em term} is either a {\em constant}, or a {\em variable}, or 
an expression of the form $f(t_1,\ldots,t_m)$, where $f$ is a 
function symbol and $t_1,\ldots,t_m$ are terms.
An {\em atom} is a formula of the form $p(t_1,\ldots,t_m)$,
where $p$ is a {\em predicate symbol} and $t_1,\ldots,t_m$
are terms.
A {\em literal} is either an atom or a negated atom.
A {\em rule} is a formula of the form
$A \leftarrow L_1 \wedge \ldots \wedge L_n$, where $A$ is an {atom}
(the {\em head} of the rule)
and $L_1 \wedge \ldots \wedge L_n$ is a conjunction
of  literals (the {\em body} of the rule). 
If $n=0$ we call the rule a {\em fact}. 
A rule (term, atom, literal)  is {\em ground} if no variables occur in it.
A {\em logic program} is a set of rules. 
A {\em definite} program is a logic program
with no negated atoms in the body of its rules. 
For a logic program $P$, by $\textit{ground}(P)$ we denote 
the set of ground instances of rules in $P$.

Let $B_P$ denote the {\em Herbrand base} for $P$, that is, the set of ground
atoms that can be constructed in the language of program $P$.
An (Herbrand) {\em interpretation} $I$ is a subset of $B_P$.
A ground atom $A$ is true in $I$ if $A\in I$.
A ground negated atom $\neg A$ is true in $I$ if $A\not\in I$.
A ground rule $A \leftarrow L_1 \wedge \ldots \wedge L_n$
is true in $I$ if either $A$ is true in $I$
or, for some $i\in\{1,\ldots, n\}$, $L_i$ is not true in $I$.
An interpretation is a {\em model} of $P$ if all rules in $\textit{ground}(P)$
are true in $I$.
Every definite program has a {\em least} Herbrand model. However, this
property does not hold for general logic programs.

A {\em (local) stratification}
is a function $\sigma$ from the Herbrand base $B_P$ to the 
set of all countable ordinals~\cite{lp-negation,lp-Przymusinski}.
However, for the purposes of this paper it will be enough to consider
stratification functions from $B_P$ to the 
set $\mathbb{N}$ of the natural numbers.
For a ground atom $A$, $\sigma(A)$ is called the {\em stratum} of $A$.
A~stratification $\sigma$ extends to negated atoms by taking 
$\sigma(\neg A)\!=\!\sigma(A)+1$.
A ground rule $A \leftarrow L_1 \wedge \ldots \wedge L_n$ is 
{\em stratified with respect to} $\sigma$ if, for $i=1,\ldots,n,$
$\sigma(A)\!\geq\!\sigma(L_i)$.
A program $P$ is stratified with respect to $\sigma$ if 
every rule in $\textit{ground}(P)$ is.
Finally, a logic program is {\em stratified} if it is stratified 
with respect to some stratification function. 

The perfect model of $P$, denoted $\textit{Perf}(P)$, is
defined as follows.
Let $P$ be stratified with respect to $\sigma$. 
For every $n\in\mathbb{N}$, let $S_n$ be the set
of rules in $\textit{ground}(P)$ whose head has
stratum $n$.
Thus, $\textit{ground}(P)=\bigcup_{n\in \mathbb{N}} S_n$.
We define a sequence of interpretations as follows:
(i) $M_0$ is the least model of  $S_0$ 
(note that $S_0$ is a definite program), and
(ii) $M_{n+1}$ is the least 
model of $S_n$ that contains $M_n$.
The {perfect model} of $P$, 
is defined as $\textit{Perf}(P)=\bigcup_{n\in\mathbb{N}}M_{n}$.
(Here we are using the simplifying assumption that
the codomain of the stratification function is $\mathbb{N}$.)


The operational semantics of logic programs is based on 
the notion of derivation, which is constructed by
{\em SLD-resolution} augmented with the {\em Negation
as Failure} rule \cite{lloyd}.
Given a stratified 
program $P$, we will define below the {\em one-step derivation} relation 
$Q_1 \stackrel{\theta}{\longrightarrow}Q_2$,
where $Q_1, Q_2$ are {\em queries}, that is,
conjunctions of literals, and $\theta$ is a substitution.
The definition of one-step derivation relation
depends on the following notions.
A {\em derivation} for a query $Q_0$ with respect to 
$P$ is a sequence 
$Q_0 \stackrel{\theta_1}{\longrightarrow} \ldots \stackrel{\theta_n}{\longrightarrow} Q_n$ ($n\geq 1$).
We will omit the reference to $P$ when clear from the context.
A {derivation} is {\em successful} if its last query
is the empty conjunction \textit{true}.
A query {\em succeeds}  if there exists a
successful derivation for it.
A query {\em fails}   if it does not succeed.
The one-step derivation relation is defined by the following two derivation rules.

\pagebreak
\begin{enumerate}
\item[(P)] Let $A \wedge Q$ be a
query, where $A$ is an atom. Suppose that
$H \leftarrow  K_1 \wedge \ldots \wedge K_m$ ($m\geq 0$)
is a rule in $P$ such that $A$ is unifiable with $H$ via a most general unifier~$\theta$~\cite{lloyd}.
Then
$A \wedge Q \stackrel{\theta}{\longrightarrow} (K_1 \wedge \ldots \wedge K_m  \wedge Q)\theta$.

\item[(N)] Let $\neg A \wedge Q$ be a
query, where $A$ is a ground atom.
Suppose that $A$ fails.
Then $\neg A \wedge Q \stackrel{\epsilon}{\longrightarrow} Q$,
where $\epsilon$ is the identity substitution.
\end{enumerate}

Note that in the definition of a derivation 
we assume the left-to-right selection rule for literals.
Note also that, in rule (N)  the one-step derivation
from $\neg A \wedge Q$
refers to the set of all derivations from $A$
(to show that $A$ fails).
However, this definition is well-founded because
the program $P$ is stratified.
We say that a query $Q$ is {\em generable}
from a query $Q_0$ if there exists 
either a derivation $Q_0 \stackrel{\theta_1}{\longrightarrow} \ldots \stackrel{\theta_n}{\longrightarrow} Q$
or a derivation $Q_0 \stackrel{\theta_1}{\longrightarrow} \ldots \stackrel{\theta_n}{\longrightarrow} \neg A \wedge Q_n$ and 
$Q$ is generable from~$A$.
An {\em answer} for a query $Q_0$ 
is a substitution
$\theta$ such that there exists a successful 
derivation $Q_0 \stackrel{\theta_1}{\longrightarrow}  \ldots \stackrel{\theta_n}{\longrightarrow} \textit{true}$ and $\theta$ is
the restriction of the composition  $\theta_1\ldots \theta_n$ to the variables occurring in $Q_0$.
A query $Q_0$ {\em flounders} if there exists
a query $Q$ generable from $Q_0$ such that the leftmost literal
of $Q$ is a non ground negated atom.

The operational semantics is {\em sound} and {\em complete}
with respect to 
the perfect model semantics for queries that do not flounder.
Indeed, it can be shown that (see, for instance, \cite{lp-Przymusinski,lp-negation}), given a program $P$ and an atom $A_0$ that does not flounder with
respect to $P$, then: 
(1) if $A_0$ succeeds with answer 
$\theta$, then every ground instance of $A_0\theta$ belongs to 
$\textit{Perf}(P)$, and
(2) if $A_0\theta$ belongs to $\textit{Perf}(P)$ for some substitution $\theta$, 
then $A_0$ succeeds with an answer which is
more general than $\theta$.

The definition of a derivation given above is quite abstract
and not fully constructive. In particular, the
application of rule (N) requires to test that an atom has no successful derivations, 
and this property 
is undecidable in the general case.
Thus, an effective query evaluation strategy depends on
the concrete way derivations are constructed.

A well-known difficulty of the evaluation strategy
based on depth-first search 
is that infinite derivations may be constructed, 
even in cases where a finite set of atoms (modulo variants)
is derived from a given initial query.
In particular, this nonterminating behavior can occur
for stratified {\em Datalog} programs, that is, function
free stratified programs.


In order to avoid this difficulty, in this paper we adopt 
{\em SLG-resolution}, a query evaluation mechanism
that implements SLD resolution
with Negation as Failure by means of {\em tabling} \cite{tabling}.
During the construction of the derivations
for a given atom $A_0$, a {\em table} is maintained
to record the answers to $A_0$ and to
the atoms generated from $A_0$. 
The tabled answers are used
the next time an atom is generated,
and hence no atom is evaluated more than once.
Thus, SLG-resolution is able to compute in finite time
all answers to a query, if a finite set of atoms is
generated and a finite set of answers for those atoms exists.
In particular, SLG-resolution always terminates and is able to
compute all answers for queries to
stratified Datalog programs.


\section{Rule-based Representation of BP Schemas}
\label{section:BPS}

In this section we introduce a formal representation of business processes by
means of the notion of {\em Business Process Schema} (BPS).
A BPS, its meta-model, and its procedural (or {\em behavioral}) semantics 
will all be specified by sets of rules, for which
we adopt the standard notation and semantics of LP (see Section \ref{subsec:lp}).

\subsection{Introducing BPAL}
\label{section:bpal}

The Business Process Abstract Language (BPAL) introduces a language conceived to provide a declarative modeling method capable of fully capturing procedural knowledge in a business process. BPAL constructs are common to the most used and widely accepted BP modeling languages (e.g., BPMN \cite{bpmn}, UML activity diagrams \cite{uml-activity}, EPC \cite{vicious-circle}) and, in particular, it is based on the BPMN 2.0 specification \cite{bpmn}.

Formally, a (set of) BPS(s) $\mathcal B$ is specified by a set of {ground} {facts} of the 
form $p(c_1,\dots ,c_n)$, where $c_1,\dots ,c_n$ are constants denoting flow 
elements (e.g.,  activities, events, and gateways) and \textit{p }is a predicate
symbol. In Table \ref{tab:bpallan} we list some of the BPAL predicates, and in Table  \ref{tab:bpalex} we exemplify their usage reporting the  translation of the \textit{Handle Order} process ($ho$ for short) depicted in Figure \ref{fig:example} as a BPAL BPS. An extended discussion
can be found in \cite{bpal-dexa2010,bpal-bsme}.

\begin{table}[htbp]
\caption{Excerpt of the BPAL language }
\label{tab:bpallan}
\centering
{\small 
\begin{tabular}{|l|l|}
\hline
\textbf{Construct}								 &		\textbf{Description} 		\\
\hline
\textit{bp(p,s,e)} & \textit{p }is a process, with entry-point \textit{s }and exit-point \textit{e}  \\ \hline
\textit{element(x)} & \textit{x }is a flow \textit{element }occurring in some process  \\ \hline 
\textit{relation(x,y,p)} & the \textit{elements} \textit{x }and \textit{y }are in relation in the process \textit{p}   \\ \hline
\textit{task(a)} & \textit{a} is an atomic activity   \\ \hline
\textit{event(e)} & \textit{e} is an event   \\ \hline
\textit{exception(e,a,p)} & the intermediate event $e$ (an exception) is attached to the activity $a$ \\ \hline
\textit{comp\_act(a,s,e)} & \textit{a }is a compound activity, with entry-point \textit{s }and exit-point \textit{e}  \\ \hline 
\textit{seq(el1,el2,p)} & a sequence flow relation is defined between \textit{el1} and \textit{el2} in \textit{p}  \\ \hline
\textit{par\_branch(g}) & the execution of \textit{g }enables all the successor flow elements\textit{ }  \\ \hline
\textit{par\_merge(g)} & \textit{g }waits for the completion of all the predecessor flow elements\textit{ } \\ \hline
\textit{exc\_branch(g}) & the execution of \textit{g }enables one of the successor flow elements  \\ \hline
\textit{exc\_merge(g)} & \textit{g }waits for the completion of one of the predecessor flow elements  \\ \hline
\textit{inc\_branch(g}) & the execution of \textit{g }enables at least one of its successors   \\ \hline
\textit{inc\_merge(g)} & \textit{g }waits for the completion of the predecessor flow elements \\
& that will be  eventually executed  \\ \hline
\textit{item(i)} & \textit{i }is a data element \textit{}  \\  \hline
\textit{input(a,i,p)} & the activity \textit{a }uses as input the data element\textit{ i} in the process \textit{p}  \\ \hline
\textit{output(a,i,p)} & the activity\textit{ a }uses as output the data element\textit{ i} in the process \textit{p}  \\ \hline 
\textit{participant(part)} & \textit{part} is a participant \\ \hline
\textit{assigned(a,part,p)} & the activity \textit{a }is assigned to the participant \textit{part }in the process \textit{p} 	\\ \hline
\end{tabular}
}

\end{table}

\begin{table}[ht!]
\caption{BPS representing the Handle Order process}
\label{tab:bpalex}
\centering
{\footnotesize 
\textit{
\begin{tabular}{|l|l|l|}
\hline
bp(ho,s,e) & seq(s, ordering, ho)  & comp\_act(ordering, s$_1$, e$_1$) \\ \hline
seq(ordering,g1,ho) & seq(g1,g2, ho) & assigned(ordering,sales\_clerk,ho)    \\ \hline
seq(g1,g3,ho) & seq(g3,parts\_auction,ho) & assigned (delivering,shipper,ho)  \\ \hline 
seq(g3,allocate\_inventory,ho) & seq(parts\_auction,g4,ho) & seq(s1,create\_order,ordering)  \\ \hline 
seq(allocate\_inventory,g4,ho) & seq(g4,g5,ho) & seq(create\_order,g9,ordering) \\ \hline 
seq(g5,g2,ho)  & seq(g5,select\_shipper,ho)   & seq(g9,accept\_order,ordering)\\ \hline
seq(g2,notify\_rejection,ho)  & seq(select\_shipper,g7,ho) & exception(ex,accept\_order,ordering)\\ \hline
seq(notify\_rejection,g6,ho) & seq(g6,e,ho) & seq(ex,g10,ordering)\\ \hline
exc\_branch(g1) & participant(sales\_clerk) & input(accept\_order,order,ordering) \\ \hline 
inc\_branch(g3) &  task(create\_order)  & output(create\_order,order,ordering) \\ \hline
par\_branch(g7) & item(order)  & \ldots \\ \hline
\end{tabular}
}
}
\end{table}

\pagebreak

Our formalization also includes in $\mathcal B$ a set of rules that represents the {\em meta-model}, defining $i)$  hierarchical relationships among the BPAL predicates, e.g., $activity(x) \leftarrow task(x)$; $ii)$ disjointness relationships among BPAL elements, e.g., $\bot \leftarrow activity(x) \wedge event(x)$; $iii)$ structural properties which regard  a BPS as a directed 
graph, where edges correspond to sequence and item flow relations.  A first set of structural properties represents constraints that should be verified by a \textit{well-formed} BPS, i.e., syntactically correct BPS: 
(1)  every process is assigned to a unique start event  and to a unique end event;
(2)  every flow element occurs on a path from the start event to the end event;
(3)  start events have no predecessors and end events have no successors;
(4)  branch gateways have exactly one predecessor and at least two successors, while merge gateways have at least two predecessors and exactly one successor;  
(5)  activities and intermediate events have exactly one predecessor and one successor;
(6)  there are no cycles in the hierarchy of compound activities. 

Finally, other meta-model properties are related to the notions 
of path and reachability between flow elements, such as the following ones, which will be 
used in the sequel: 
$\textit{seq}^{+}(E_1,E_2,P)$, representing the transitive closure of the \textit{sequence flow} 
relation, and
$\textit{n\_reachable}(E_1,E_2,E_3,P)$, which holds if there is a path  in  $P$ 
between $E_1$ and $E_2$ not including $E_3$, i.e.:
\smallskip

\noindent $n\_reachable (X,Y,N,P) \leftarrow  seq(X,Y,P) \wedge \naf Y=N $

\noindent $n\_reachable (X,Y,N,P) \leftarrow seq(X,Z,P) \wedge \naf Z=N \wedge n\_reachable (Z,Y,N,P)$

\smallskip
 
With respect to the framework introduced in \cite{bpal-dexa2010,bpal-bsme}, here we consider unstructured cyclic workflows whose behavioral semantics will be introduced in the following.

\subsection{Behavioral Semantics} \label{section:trace-theory}

\noindent
Now we present a formal definition of
the behavioral semantics, or {\em enactment}, of a BPS, by following
an approach inspired by the {\em Fluent Calculus}, 
a well-known calculus for action and change (see~\cite{fluent-calculus-survey}
for an introduction), which is formalized in Logic Programming.

In the Fluent Calculus, the state of the world 
is represented as a collection of \textit{fluents}, i.e., 
terms representing atomic properties that hold at a given instant of time.
An action, also represented as a term, may cause a change of state, 
i.e., an update of the collection of fluents associated with it.
Finally, a {\em plan} is a sequence of actions that
leads from the initial to the final state.
For states we use set notation
(here we depart from~\cite{fluent-calculus-survey}, 
where an associative-commutative operator
is used for representing collections of fluents).
A fluent is an expression 
of the form $f(a_{1},\dots ,a_{n})$, where $f$ is a fluent symbol and 
$a_{1},\dots ,a_{n}$ are constants or variables. 
In order to model the behavior of a BPS, we represent states as
{\em finite sets} of ground fluents. 
We take a closed-world interpretation of states, 
that is, we assume that a fluent $F$, different from {\it true},
holds in a state $S$ iff
$F \in S$.
Our set-based representation of states relies on the assumption
that the BPS is {\em safe}, that is, during its enactment
there are no concurrent executions of the same 
flow element~\cite{vander-workflow}. 
This assumption enforces that the set of  
states reachable by a given BPS is finite.
A {\em fluent expression} is built inductively from fluents, the 
binary function symbol $and$, and the unary function symbol $not$. The  satisfaction 
relation assigns a truth value to a fluent expression with respect to a state.  This
relation is encoded by a predicate $\textit{holds}(F,S)$, which holds if the fluent 
expression $F$ is true in the state $S$.
We also introduce  a constant symbol $\textit{true}$, 
such that $\textit{holds}(\textit{true},S)$ holds for every state $S$. 
Accordingly to the closed-world interpretation 
given to states, the satisfaction relation is defined by the following rules: 

\pagebreak

\smallskip
\noindent $ \textit{holds}(F,S)\leftarrow F = true $

\noindent $ \textit{holds}(F,S)\leftarrow F \in S $

\noindent $ \textit{holds}(not(F),S)\leftarrow \neg \textit{holds}(F,S) $

\noindent $ \textit{holds}(and(F_{1},F_{2}),S)\leftarrow \textit{holds}(F_{1},S) \wedge \textit{holds}(F_{2},S) $  

\smallskip
\noindent
Note that, by the perfect model semantics, reflecting the closed-world assumption, 
for any fluent $F$ different from {\it true},
$not(F)$ holds in a state $S$ iff $F\not\in S$.

We will consider the following two kinds of fluents: 
\begin{itemize}
\item  $\textit{cf}(E_1,E_2,P)$,
which means that the flow element $E_1$ has 
been executed and the successor flow element $E_2$ is waiting 
for execution, during 
the enactment of the process $P$   (\textit{cf} stands for {\em control flow});
\item  $\textit{en}(A,P)$, which means that the activity $A$ is being 
executed during the enactment of the process $P$  (\textit{en} stands for {\em enacting}).
\end{itemize}

\noindent
To clarify our terminology note that, when a flow element $E_2$
is waiting for execution, $E_2$ might not be enabled to execute,
because other conditions need
to be fulfilled, such as those depending on the synchronization with other flow elements
(see, in particular, the semantics of merging behaviors below).

We assume that the execution of an activity has a beginning and a
completion (although we do not associate a {\em duration} with activity execution),
while the other flow elements execute instantaneously. 
Thus, we will consider two kinds of actions: 
$\textit{begin}(A)$ which starts the execution of an activity~$A$, 
and $\textit{complete}(E)$, which represents the 
completion of the execution of a flow element $E$ (possibly, an activity).
The change of state determined by the execution of
an action will be formalized by a
relation $\textit{result}(S_1,A,S_2)$, which holds if  action 
$A$ can be executed in state 
$S_1$ leading to state $S_2$.
For defining the relation $\textit{result}(S_1,A,S_2)$ the following
auxiliary predicates will be used:
(i)~$\textit{update}(S_1,T,U,S_2)$,
which holds if $S_2= (S_1 - T) \cup U$, where
$S_1,T,U,$ and $S_2$ are sets of fluents, and\linebreak (ii)~$\textit{setof}(F,C,S)$,
which holds if $S$ is the set of ground instances of fluent $F$ such that condition $C$ holds.

The relation $r(S_1,S_2)$ holds if
a state $S_2$ is {\em immediately  reachable} from a state $S_1$,
that is, some action $A$ can be executed in state $S_1$ leading to state $S_2$:

\smallskip

\noindent  $r(S_1,S_2) \leftarrow \textit{result}(S_1,A,S_2) $

\smallskip

\noindent We say that a state $S_2$ is {\em reachable} from a state $S_1$  if 
there is a finite, possibly empty, 
sequence of actions from $S_1$ to $S_2$,
that is, $\textit{reachable\_state}(S_1,S_2) $ holds, where the relation $\textit{reachable\_state}$ is is the reflexive-transitive closure of $r$.
 
 In the rest of this section we present a fluent-based formalization of
the behavioral semantics of a BPS as a set of rules $\mathcal T$, partially  reported in Table \ref{tab:sem}.
The proposed formal semantics  is focused on a core of
the BPMN language and it  mainly refers to its semantics, as described (informally) in the most recent specification of 
the language \cite{bpmn}. 
Most of the constructs considered here (e.g., parallel or 
exclusive branching/merging) have the same interpretation in
most workflow languages.
However, when different interpretations are given, e.g., 
in the case of inclusive merge, we stick to the BPMN one.

\subsubsection{Activity and Event Execution}
The enactment of a process $P$ begins  with the execution of the associated start 
event $E$ in a
state where the fluent $\textit{cf}(\textit{start},E,P)$ holds, being \textit{start} 
a reserved constant. After the execution of the start event, its unique successor waits for execution (Rule E1).  The execution of an end event leads to the final state of a process execution, in which the fluent $\textit{cf}(E,\textit{end},P)$ holds, where \textit{E} is the end event associated with the process \textit{P} and \textit{end} is a reserved constant (Rule E2).

According to the informal semantics of 
BPMN, {\em intermediate events} are intended as instantaneous  
patterns of behavior that are registered at a given time point. 
Thus, we formally model the execution of an intermediate 
event as a single state transition, as defined in Rule E3. Intermediate events in BPMN can also be attached to activity boundaries to 
model exceptional flows. Upon occurrence of an {\em exception}, 
the execution of the activity is 
interrupted, and the control flow moves along the sequence flow that leaves the event (Rule E4).

The execution of an activity is enabled to 
begin after the completion of its 
unique predecessor flow element. 
The effects of the execution of an activity vary depending on its type (i.e.,
atomic task or compound activity). 
The beginning of an atomic task $A$ is modeled by adding the $\textit{en}(A,P)$
fluent to the state (Rule A1). At the completion of $A$, the $\textit{en}(A,P)$
fluent is removed and the control flow moves on to the unique successor of $A$ (Rule A2). 
The execution of a compound activity, whose internal structure is defined as a process itself, begins by enabling the execution of the associated \textit{start event} (Rule A3), and 
completes after the execution of the associated \textit{end event} (Rule A4). 

\begin{table}[b!]
\caption{Fragment of the behavioral semantics of the BPAL language }
\label{tab:sem}
\centering
{\footnotesize 
\begin{tabular}{| p{14cm}  | }
\hline
\hangindent=1.1cm \makebox[0.6cm][l]{(E1)} 
$\textit{result}(S_1,\textit{complete}(E),S_2)  \leftarrow
\textit{start\_event}(E) \wedge 
\textit{holds}(\textit{cf}(\textit{start},E,P),S_1) \wedge
\textit{seq}(E,X,P) \wedge 
\textit{update}(S_1, \{\textit{cf}(\textit{start},E,P)\}, \{\textit{cf}(E,X,P)\},S_2) $
\\ 
\hangindent=1.1cm \makebox[0.6cm][l]{(E2)} 
$\textit{result}(S_1,\textit{complete}(E),S_2)  \leftarrow
\textit{end\_event}(E) \wedge 
\textit{holds}(\textit{cf}(X,E,P),S_1) \wedge \newline
 \textit{update}(S_1,\{\textit{cf}(X,E,P)\},\{\textit{cf}(E,\textit{end},P)\},S_2)$
\\ 
\hangindent=1.1cm \makebox[0.6cm][l]{(E3)} 
$\textit{result}(S_1,\textit{complete}(E),S_2)  \leftarrow
\textit{int\_event}(E) \wedge
\textit{holds}(\textit{cf}(X,E,P), S_1) \wedge
\textit{seq}(E,Y,P) \wedge \newline
\textit{update}(S_1, \{\textit{cf}(X,E,P)\}, \{\textit{cf}(E,Y,P)\},S_2) $
\\
\hangindent=1.1cm \makebox[0.6cm][l]{(E4)}
$\textit{result}(S_1,\textit{complete}(E),S_2)  \leftarrow
\textit{exception}(E,A,P) \wedge
\textit{int\_event}(E) \wedge \textit{holds}(\textit{en}(A,P), S_1) \wedge\newline
\textit{seq}(E,Y,P) \wedge
\textit{update}(S_1, \{\textit{en}(A,P)\}, \{\textit{cf}(E,Y,P)\},S_2) $
\\ \hline \hline

\hangindent=1.1cm \makebox[0.6cm][l]{(A1)} 
$\textit{result}(S_1,begin(A),S_2)  \leftarrow
\textit{task}(A) \wedge
\textit{holds}(\textit{cf}(X,A,P), S_1) \wedge \newline
\textit{update}(S_1, \{\textit{cf}(X,A,P)\}, \{\textit{en}(A,P)\}, S_2)$
\\ 
\hangindent=1.1cm \makebox[0.6cm][l]{(A2)} 
$\textit{result}(S_1,\textit{complete}(A),S_2)  
\leftarrow
\textit{task}(A) \wedge
\textit{holds}(\textit{en}(A,P),S_1) \wedge
\textit{seq}(A,Y,P) \wedge\newline
\textit{update}(S_1,\{\textit{en}(A,P)\},\{\textit{cf}(A,Y,P)\},S_2)$
\\ 
\hangindent=1.1cm \makebox[0.6cm][l]{(A3)} 
$\textit{result}(S_1,begin(A),S_2)  \leftarrow
\textit{comp\_act}(A,S,E) \wedge \textit{holds}(and(\textit{cf}(X,A,P), not(\textit{en}(A,P))),S_1) \wedge \newline  
\textit{update}(S_1,\{\textit{cf}(X,A,P)\},\{\textit{cf}(start,S,A), \textit{en}(A,P)\},S_2)$
\\ 
\hangindent=1.1cm \makebox[0.6cm][l]{(A4)} 
$\textit{result}(S_1,\textit{complete}(A),S_2) \leftarrow \textit{comp\_act}(A,S,E) \wedge  \textit{holds}(and(\textit{cf}(E,\textit{end},A), \textit{en}(A,P)),S_1) \wedge 
\textit{seq}(A,Y,P) \wedge  \textit{update}(S_1,\{\textit{en}(A,P), \textit{cf}(E,\textit{end},A)\},  \{\textit{cf}(A,Y,P)\},S_2) $ 
\\  \hline \hline

\noindent \hangindent=1.1cm \makebox[0.6cm][l]{(B1)} 
$\textit{result}(S_1,\textit{complete}(B),S_2)  \leftarrow
\textit{exc\_branch}(B) \wedge 
\textit{holds}(\textit{cf}(X,B,P), S_1) \wedge 
 \textit{seq}(B,Y,P) \wedge \newline$
$update(S_1,\{\textit{cf}(X,B,P)\},\{\textit{cf}(B,Y,P)\},S_2)$
\\ 

\hangindent=1.1cm \makebox[0.6cm][l]{(B2)} 
$\textit{result}(S_1,\textit{complete}(B),S_2)  \leftarrow
\textit{inc\_branch}(B) \wedge 
\textit{holds}(\textit{cf}(X,B,P),S_1) \wedge \newline
\textit{setof}(\textit{cf}(B,Y,P), \textit{seq}(B,Y,P) , \textit{Succ}) \wedge \textit{subseteq}(SubSucc,Succ) \wedge
\neg \textit{emptyset}(SubSucc)  \wedge  \newline
\textit{update}(I,\{\textit{cf}(X,B,P)\},\textit{SubSucc},S_2)$
\\ 
\noindent \hangindent=1.1cm \makebox[0.6cm][l]{(B3)} 
$\textit{result}(S_1,\textit{complete}(B),S_2)  \leftarrow
\textit{par\_branch}(B) \wedge 
\textit{holds}(\textit{cf}(X,B,P), S_1) \wedge \newline
\textit{setof}(\textit{cf}(B,Y,P), \textit{seq}(B,Y,P), \textit{Succ}) \wedge$
$update(S_1,\{\textit{cf}(X,B,P)\},\textit{Succ},S_2)$
\\ \hline \hline

\hangindent=1.1cm \makebox[0.6cm][l]{(X1)} 
$\textit{result}(S_1,\textit{complete}(M),S_2)  
\leftarrow
\textit{exc\_merge}(M) \wedge
\textit{holds}(\textit{cf}(A,M,P),S_1) \wedge
\textit{seq}(M,Y,P) \wedge\newline
\textit{update}(S_1,\{\textit{cf}(A,M,P)\},\{\textit{cf}(M,Y,P)\},S_2)$
\\ 

\noindent \hangindent=1.1cm \makebox[0.6cm][l]{(O1)} 
$\textit{result}(S_1,\textit{complete}(M),S_2) \leftarrow
\textit{inc\_merge}(M) \wedge
\textit{enabled\_im}(M,S_1,P)\wedge 
\textit{seq}(M,Y,P)  \wedge \newline
\textit{setof}(\textit{cf}(X,M,P), \textit{holds}(\textit{cf}(X,M,P),S_1),\textit{PredM}) \wedge 
\textit{update}(S_1,\textit{PredM},\{\textit{cf}(M,Y,P)\},S_2)$
\\ 
\noindent \hangindent=1.1cm \makebox[0.6cm][l]{(O2)} 
$\textit{enabled\_im}(M,S_1,P) \leftarrow
\textit{holds}(\textit{cf}(X,M,P),S_1)  \wedge
\neg \textit{exists\_upstream}(M,S_1,P)$
\\ 
\noindent \hangindent=1.1cm \makebox[0.6cm][l]{(O3)} 
$\textit{exists\_upstream}(M,S_1,P) \leftarrow
\textit{seq}(X,M,P) \wedge
\textit{holds}(\textit{not}(\textit{cf}(X,M,P)),S_1) \wedge \newline
\textit{holds}(\textit{cf}(Y,U,P),S_1) \wedge
\textit{upstream}(U,X,M,S_1,P)$
\\
\noindent \hangindent=1.1cm \makebox[0.6cm][l]{(O4)} 
$\textit{upstream}(U,X,M,S_1,P) \leftarrow
\textit{n\_reachable}(U,X,M,P) \wedge
\neg \textit{exists\_path}(U,M,S_1,P)$
\\ 
\noindent \hangindent=1.1cm \makebox[0.6cm][l]{(O5)} 
$\textit{exists\_path}(U,M,S_1,P) \leftarrow
\textit{holds}(\textit{cf}(K,M,P),S_1) \wedge
\textit{n\_reachable}(U,K,M,P)$
\\ 
\noindent \hangindent=1.1cm \makebox[0.6cm][l]{(P1)} 
$ \textit{result}(S_1,\textit{complete}(M),S_2)  
\textit{par\_merge}(M) \wedge
\neg \textit{exists\_non\_executed\_pred}(M,P,S_1)\! \wedge\!
\newline
\textit{seq}(M,Y,P)\! \wedge
\textit{setof}(\textit{cf}(X,M,P), \textit{seq}(X,M,P),\textit{PredM}) \wedge 
\textit{update}(S_1,\textit{PredM},\{\textit{cf}(M,Y,P)\},S_2) $
\\
\noindent \hangindent=1.1cm \makebox[0.6cm][l]{(P2)} 
$\textit{exists\_non\_executed\_pred}(M,P,S_1)\leftarrow
\textit{seq}(X,M,P) \wedge
\textit{holds}(not ( \textit{cf}(X,M,P)),S_1)$
\\ \hline
\end{tabular}
}
\end{table}

\subsubsection{Branching Behaviors}
\label{subsec:cond}

When a branch gateway is executed, a subset of its successors is selected for 
execution. We consider here exclusive, inclusive, and parallel branch gateways.

An exclusive branch leads to the execution of exactly one successor (Rule B1), while an inclusive branch leads to the concurrent execution of a non-empty subset
of its successors (Rule B2). The set of successors of exclusive or inclusive decision points may depend  on {\em guards}, i.e., conditions that usually take the form of tests on the value of the items that are passed between the activities. 
While Rules B1-B2 formalize a nondeterministic choice among the successors of a decision point, in Section  \ref{section:state-updates} guard expressions will be included in the framework in the form of fluent expressions whose truth value is tested with respect to the current state. Finally, a parallel branch leads to the concurrent execution of all its successors (Rule B3).

\subsubsection{Merging Behaviors}

An exclusive merge
can be executed whenever at least one of its predecessors has been 
executed (Rule X1). 

For the inclusive merge several operational semantics have been proposed, due to the complexity of its non-local semantics (see e.g., \cite{vicious-circle,or-ibm}). An inclusive 
merge is supposed to be able to synchronize a varying number of threads, i.e., it is 
executed only when $n (\geq 1)$ predecessors have been executed and no other will 
be eventually executed. Here we refer to the semantics described in \cite{or-ibm} adopted 
by BPMN, stating that (Rule O1) an inclusive merge \textit{M} can be executed if the following two 
conditions hold (Rules O2, O3):

\vspace*{-2mm}
\begin{enumerate}
\item[(1)]  at least one of its predecessors has been executed,
\item[(2)]  for each non-executed predecessor \textit{X}, there is no flow element \textit{U} which is waiting for execution and is {\em upstream} \textit{X}.
The notion of being upstream captures the
fact that $U$ may lead to the execution of $X$, and is
defined as follows.
A flow element \textit{U} is upstream  $X$ if (Rules O4, O5):
\textit{a)} there is a path from \textit{U} to \textit{X} not including \textit{M}, and
\textit{b)} there is no path from \textit{U} to an executed predecessor of \textit{M} not including~\textit{M}. 
\end{enumerate}
\vspace*{-2mm}

%
%
%
%
%
%
%
%
%
%
%
%
\noindent
Finally, a parallel merge can be executed if all its predecessors have been 
executed as defined in Rule P1, where $\textit{exists\_non\_executed\_pred}(M,P,S_1)$ holds if there exists no predecessor of $M$ which has not been executed in state $S_1$ (Rule P2).

\subsubsection{Item Flow} \label{subsec:ItemFlow}
BP modeling must be able to represent the physical and the information 
items that are produced and consumed by the various activities during the execution of a 
process. For the formalization of the \textit{item flow} semantics, we commit to the BPMN 
standard, where the so-called \textit{data objects} are used to store information created 
and read by activities. 

In our approach 
items are essentially regarded as variables, and hence there is a single instance of a 
given item any time during the execution that may be (over-)written by some activity. We 
consider two main types of relations between activities and items. First of all, an 
activity may \textit{use} a particular item (\textit{input} relation). 
This implies that the item is 
expected to hold a value before the activity is executed. Second, an activity may produce a 
particular value (\textit{output} relation), causing the item to get a new value. 
If it has no 
value yet, it is created, otherwise it is overwritten. It is worth noting that the item flow is 
not necessarily imposed over the control flow, but they interact for the definition of 
the process behavior. For instance, an activity expecting a value from a given item, may 
also cause a deadlock if this condition is never satisfied. 

The item flow is modeled through the fluent $\textit{wrtn}(A,\textit{It},P)$
(\textit{wrtn} stands for ``written") representing the situation 
in which the item \textit{It} has been produced by the activity
\textit{A} in the 
enactment of the process \textit{P}. 
In order to handle item manipulation, the semantics of task enactment (Rules A1, A2) is extended as follows:

\smallskip

\noindent \hangindent=0.3cm $\textit{result}(S_1,\textit{begin}(A),S_2)  \leftarrow
\textit{task}(A) \wedge \textit{holds}(\textit{cf}(X,A,P),S_1)\wedge \neg \textit{blocked\_input}(A,P,S_1) \wedge
\newline
\textit{input}(A,\textit{It},P) \wedge
\textit{update}(S_1,\{\textit{cf}(X,A,P)\},\{\textit{en}(A,P)\},S_2) $

\smallskip

\noindent \hangindent=0.3cm $\textit{result}(S_1,\textit{complete}(A),S_2)  \leftarrow \textit{task}(A) \wedge
\textit{holds}(\textit{en}(A,P),S_1) \wedge
\textit{seq}(A,Y,P) \wedge
\newline
\textit{setof} (\textit{wrtn}(A,\textit{It},P), 
\textit{output}(A,\textit{It},P), U)\wedge
\textit{update}(S_1,\{\textit{en}(A,P)\},\{\textit{cf}(A,Y,P)\}\cup U,S_2) $

\smallskip

\noindent
where $\textit{blocked\_input}(A,P,S_1)$ holds if, at a state $S_1$ 
during the enactment of process $P$, there exists some input
item \textit{It} for $A$ that has not been produced. Thus,

\smallskip

\noindent \hangindent=0.3cm \( \textit{blocked\_input}(A,P,S_1)\leftarrow \textit{input}(A,\textit{It},P) \wedge \neg \textit{updated\_item}(\textit{It},P,S_1) \) 

\noindent \hangindent=0.3cm \(\textit{updated\_item}(\textit{It},P,S_1) \leftarrow \textit{holds}(\textit{wrtn}(B,\textit{It},P),S_1))\)

\smallskip

\noindent
The case of compound activities can be treated in a similar way and is omitted.

\section{Semantic Annotation}
\label{section:sem-annotation}

\noindent
In the previous section we have shown how a procedural representation of a BPS can be modeled in our rule-based framework as an activity workflow.
However, not all the relevant knowledge regarding process enactment is captured by 
a workflow model, which defines the planned order of operations
but does not provide an explicit representation of the domain knowledge regarding the
entities involved in such a process, i.e., the business environment in which processes 
are carried out.

Similarly to proposals like 
{\em Semantic BPM} \cite{semantic-bpm} and {\em Semantic
Web Services} \cite{wsmo-book}, we will make use of 
{\em semantic annotations} to enrich
the procedural knowledge specified by a BPS with domain knowledge
expressed in terms of a given business reference ontology.
Annotations provide two kinds of ontology-based information: 
\textit{(i)}~formal definitions of the basic entities involved in a process 
(e.g., activities, actors,
items) to specify their meaning in an unambiguous way ({\em terminological} annotations), and 
\textit{(ii)}~specifications of preconditions and effects of the enactment of flow elements
 ({\em functional} annotations). 

\subsection{Reference Ontology}
\label{sec:enrichment}

A business reference ontology is intended to capture  the semantics of a business scenario 
in terms of the relevant  vocabulary plus a set of axioms (TBox) that define 
the intended meaning of the vocabulary terms. 
In order to represent the semantic annotations of a BPAL BPS in a uniform way, we will consider ontologies falling within the OWL 2 RL  profile (See Section \ref{sect:owl}), and hence expressible as sets of rules.
An OWL 2 RL ontology is represented as a set $\O$ of rules, consisting of a set of 
facts of the form $t(s,p,o)$, called {\em triples}, encoding the specific 
OWL TBox, along with the rules that are common to all OWL 2 RL ontologies, such as the ones of Table \ref{tab:owl-rl}. 

In Table \ref{tab:reference-onto} we show an example of business reference ontology for the annotation of the Handle Order process depicted in Figure \ref{fig:example}. For the sake of conciseness and clarity, 
the axioms of ontology are represented as DL expressions, 
instead of sets of triples. The translation into triple form can be
done automatically as shown in \cite{dlp,pd*}.


\begin{table}[ht!]
\caption{Business Reference Ontology excerpt}
\label{tab:reference-onto}
\centering
{\footnotesize

\begin{tabular}{|c|c|} \hline 
\multicolumn{2}{|c|}{\textbf{Actors}}  \\ \hline 
\textit{Organizational\_Actor} $\sqsubseteq$ \textit{Actor} & 	\textit{Human\_Actor} $\sqsubseteq$ \textit{Actor}   \\ \hline  
\textit{Corporate\_Customer} $\sqsubseteq$ \textit{Organizational\_Actor} 			 &   \textit{Employee} $\sqsubseteq$ \textit{Human\_Actor}    \\ \hline
\textit{Department} $\sqsubseteq$ \textit{Organizational\_Actor} &  \textit{Business\_Partner} $\sqsubseteq$ \textit{Organizational\_Actor}\\ \hline
    \textit{Accounting\_Dpt} $\sqsubseteq$ \textit{Department}&  \textit{Supply\_Dpt} $\sqsubseteq$ \textit{Department}  \\ \hline  
\textit{Order\_Mgt\_Dept} $\sqsubseteq$ \textit{Department} &  \textit{Warehouse\_Mgt} $\sqsubseteq$ \textit{Department} \\ \hline
\textit{Carrier} $\sqsubseteq$ \textit{Organizational\_Actor} & \textit{Courier} $\sqsubseteq$ \textit{Carrier} $\AND$ \textit{Business\_Partner}\\
\hline
\textit{Carrier\_Dpt} $\sqsubseteq$ \textit{Carrier} $\AND$ \textit{Department}  &  \\ \hline
  
\multicolumn{2}{|c|}{\textbf{Objects}}  \\ \hline 
\textit{ClosedPO} $\ISA$ \textit{Purchase\_Order} &   \textit{ApprovedPO} $\ISA$ \textit{Purchase\_Order}  \\ \hline
 
\textit{CancelledPO} $\ISA$ \textit{ClosedPO}   & \textit{FulfilledPO} $\ISA $ \textit{ClosedPO} \\ \hline

\textit{UnavailablePL} $\ISA$ \textit{Part\_List} & \textit{AvailablePL} $\ISA$ \textit{Part\_List}  \\ \hline 
 \textit{payment} $\ISA$ \textit{related} &  $\exists$ \textit{payment}$^{-} \ISA$ \textit{Invoice}    \\ \hline  
 
 \textit{CancelledPO} $\AND$ \textit{ApprovedPO}  $\ISA \bot$ & \textit{UnavailablePL} $\AND$  \textit{AvailablePL} $\ISA \bot$ \\ \hline 
\textit{ApprovedPO} $\AND\ \SOME{\textit{related}}{\textit{Invoice}}$   $\ISA$ \textit{FulfilledPO} & 
   \textit{Order} $\AND\ \SOME{\textit{related}}{\textit{UnavailablePL}}$   $\ISA$ \textit{CancelledPO}
   \\ \hline
   \multicolumn{2}{|c|}{\textbf{Processes}}  \\ \hline 
\textit{AuthorizingProcedure} $\sqsubseteq$ \textit{Process} & \textit{Transportation} $\sqsubseteq$ \textit{Process} \\ \hline
\textit{Payment} $\sqsubseteq$ \textit{Process} & \textit{Invoicing} $\sqsubseteq$ \textit{Process} \\ \hline
\textit{Communication} $\sqsubseteq$ \textit{Process} & \textit{Refuse} $\sqsubseteq$ \textit{Communication} \\ \hline
\textit{Rejecting} $\sqsubseteq$ \textit{Authorizing\_Procedure}  & \textit{Accepting} $\sqsubseteq$ \textit{Authorizing\_Procedure}\\ \hline
 \multicolumn{2}{|c|}{\textbf{Relations}}  \\ \hline 
\textit{member} $\sqsubseteq$ \textit{related} & \textit{content} $\sqsubseteq$ \textit{related}\\ \hline
\textit{destination} $\sqsubseteq$ \textit{related} & $\exists$\textit{member} $\ISA$ \textit{Human\_Actor}\\ \hline
 $\exists$\textit{member}$^{-} \ISA$ \textit{Organizational\_Actor}   &  \\ \hline

\end{tabular}
}
\end{table}

\subsection{Terminological Annotation}
\label{sec:TermAnn}
A \textit{terminological  annotation}  associates elements of a BPS with concepts of a reference ontology, in order to describe the former in terms of a suitable conceptualization of the underlying business  domain provided by the latter. This association is specified by a set of OWL assertions of the form $ BpsEl : \SOME{\textit{termRef}}{\textit{Concept}}$, where:

\vspace*{-2mm}

\begin{itemize}

\item \textit{BpsEl} is an element of a BPS;

\vspace*{-2mm}

\item \textit{Concept} is either \textit{i)} a named concept defined in  the ontology, e.g., \textit{Purchase\_Order}, or \textit{ii)} a complex concept, defined by a class expression, e.g., \textit{Rejecting} $\sqcap \ \linebreak \exists$\textit{content.Purchase\_Order};

\vspace*{-2mm}

\item $\textit{termRef}$ is an OWL property name. 
\end{itemize}

\vspace*{-2mm}

\noindent
We do not assume that every BPS element is annotated,
nor that every concept  is the meaning associated with some BPS element. 
Furthermore, different BPS elements could be annotated with respect to the same concept, to provide an alignment of the different terminologies and conceptualizations used in different BP schemas. E.g., the activities \emph{bill\_client} and \emph{issue\_invoice} occurring in different processes may  actually refer to the same notion, suitably defined in the ontology. 

\begin{example1}{\rm 
Examples of annotations  related to the Handle Order process (Figure \ref{fig:example}) are listed below. The item \emph{order} is annotated with the \emph{Purchase\_Order}  concept, while the participant \emph{shipper} with the concept \emph{Carrier}, which can be either an internal \emph{Department} or a \emph{Business\_Partner}. A \emph{sales\_clerk}  is defined as an \emph{Employee}, which is part of the \mbox{\emph{Order\_Mgt\_Dpt}}. The task \emph{delivering} is defined as a \emph{Transportation} related to some sort of \emph{Product}. Finally, \emph{notify\_rejection} represents a \emph{Communication} with a \emph{Corporate\_Customer}, and in particular, a \emph{Refuse} related to \emph{Purchase\_Order}. 

\smallskip
 $ order : \SOME{\textit{termRef}}{Purchase\_Order}$ \\
\indent  $ shipper : \SOME{\textit{termRef}}{Carrier}$\\
\indent  $sales\_clerk : \SOME{\textit{termRef}}{(\textit{Employee} \sqcap \SOME {\textit{member}}{\textit{Order\_Mgt\_Dpt}})}$\\
 \indent $delivering: \SOME{\textit{termRef}}{(\textit{Transportation} \sqcap \SOME {\textit{related}}{\textit{Product}})}$ \\
 \indent $\textit{notify\_rejection}: \SOME{\textit{termRef}}{(\textit{Refuse} \sqcap \SOME {\textit{content}}{\textit{Purchase\_Order}} \ \sqcap $ \\  
 			\indent \indent \indent $ \SOME {\textit{destination}}{\textit{Corporate\_Customer}}  )}$ 
}
\end{example1}

\subsection{Functional Annotation}
\label{section:state-updates}

By using the ontology vocabulary and axioms, we define semantic annotations for 
modeling the behavior of individual process elements in terms of \textit{preconditions} 
under which a flow element can be executed, and \textit{effects} on the state of the world 
after its execution.  Preconditions and effects, collectively called {\em functional annotations}, can be used, for instance, to model 
input/output relations of activities with business entities.   Fluents can represent the 
\textit{properties} of a business entity affected by the execution of an activity
at a given time during the execution of the process. 
A precondition specifies the properties a business entity must posses 
when an activity is enabled to start, and
an effect specifies the properties of a business entity after having completed an activity.  These aspects are only partially supported by current BP modeling notations, such as BPMN,  in terms of data objects representing information storage during the BP enactment.

Functional annotations are formulated by means of the following  relations:

\begin{itemize}
\item $\textit{pre}(A,C,P)$, which specifies a fluent expression $C$, 
called \textit{enabling condition}, 
that must hold to execute an element \textit{A} in the 
process \textit{P};
\item $\textit{eff}(A,Q,E^{-},E^{+},P)$, which specifies the set 
$E^{-}$ of fluents, called \textit{negative effects},
that do not hold after the execution of $A$  and 
the set of fluents $E^{+}$,  called \textit{positive effects}, 
that hold after the execution of $A$ in
the process $P$. $Q$ is a fluent expression that must hold to complete the activity $A$.
We assume that $E^{-}$ and $E^{+}$ are disjoint sets of fluents, 
and the variables occurring in them also occur in $Q$.
\item $\textit{c\_seq}(G,B,Y,P)$, which models a \textit{conditional sequence flow} used to select the set of successors of decision points.
 $G$ is a $guard$ associated to the exclusive or inclusive branch gateway $B$, i.e., a
fluent expressions that must hold in order to enable the flow element $Y$,
successor of $B$ in the process $P$. 
We also have the rule $\textit{seq}(B,Y,P) \leftarrow \textit{c\_seq}(G,B,Y,P)$. 
\end{itemize}

The enabling conditions, the guards and the negative and positive effects 
occurring in functional annotations
are fluent expressions built from 
fluents of the form $\tf(s,p,o)$, corresponding to the OWL statement $t(s,p,o)$,
where we adopt the usual \textit{rdf}, \textit{rdfs}, and \textit{owl}
prefixes for names in the OWL vocabulary, and the \textit{bro} prefix 
for names relative to our specific examples.
We assume that the fluents appearing in  functional annotations
are either of the form $\tf(a,\textit{rdf:type},c)$, 
corresponding to the unary atom $c(a)$,
or of the form $\tf(a,p,b)$, corresponding to
the binary atom $p(a,b)$, where $a$ and $c$ are {\em individuals},
while $c$ and $p$ are concepts and properties, respectively, defined in the reference ontology~$\O$.
Thus, fluents correspond to assertions about individuals, i.e., 
assertions belonging to the ABox of 
the ontology, and hence the ABox
may change during process enactment
due to the effects specified by the functional annotations,
while $\O$, providing the ontology definitions and axioms, i.e., the TBox
of the ontology, does not change.

Let us now present an example of specification of functional annotations.
In particular, our example shows nondeterministic effects,
that is, a case where a flow element $A$ is associated with more than one 
pair $(E^{-},E^{+})$ of negative and positive effects.

\begin{example1} \label{ex:close-order}{\rm 
Consider again the \emph{Handle Order} process shown in Figure \ref{fig:example}. After the 
execution of \emph{create\_order}, a purchase order is issued. This order can be 
approved or 
canceled upon execution of the activities \emph{accept\_order} and 
\emph{cancel\_order},
respectively.  Depending on the inventory capacity checked during the
 \emph{check\_inventory} task, the requisition of parts performed by an external supplier is performed 
(\emph{parts\_auction}). Once that all the order parts are available, the order can be 
fulfilled and an invoice is associated with the order. 
This behavior is specified by the functional 
annotations reported in Table \ref{tab:functional-annotations}.}
\end{example1}
 
 \begin{center}
\begin{table*}[h!]
\centering
\footnotesize{
\caption{Functional annotation for the Handle Order process}
\label{tab:functional-annotations}  
\begin{tabular}{|c|l|l|} \hline 
\textbf{Flow Element} & \textbf{Enabling Condition} (pre)  & \textbf{Effects} (eff) \\ \hline 
create\_order &\textit{true} & Q: \textit{ true}  \\
		   & &  E$^{+}$: \{$\tf(o,\textit{rdf:type},\textit{bro:Purchase\_Order})$\} \\ \hline

accept\_order & $\tf(O,\textit{rdf:type},\textit{bro:Purchase\_Order})$ & Q: $\tf(O,\textit{rdf:type},\textit{bro:Purchase\_Order})$ \\
		   & &  E$^{+}$: \{$\tf(O,\textit{rdf:type},\textit{bro:ApprovedPO})$\} \\ \hline

cancel\_order &$ \tf(O,\textit{rdf:type},\textit{bro:ApprovedPO}) $& Q: $\tf(O,\textit{rdf:type},\textit{bro:ApprovedPO})  $  \\
		   & &  E$^{-}$: \{$\tf(o,\textit{rdf:type},\textit{bro:ApprovedPO})$\}  \\ 		   & &  E$^{+}$: \{$\tf(o,\textit{rdf:type},\textit{bro:CancelledPO})$\}	  \\ 
\hline

check\_inventory &$ \tf(O,\textit{rdf:type},\textit{bro:ApprovedPO}) $& Q: $\tf(O,\textit{rdf:type},\textit{bro:ApprovedPO})  $  \\
		   & &  E$^{+}$: \{$\tf(O,\textit{bro:related},\textit{pl}), $	  \\ 
		   & & \ \ \ \ \ \ \ \ $\tf(pl,\textit{rdf:type},\textit{bro:Part\_List}) $\}  \\ \hline

check\_inventory &  $\tf(O,\textit{rdf:type},\textit{bro:ApprovedPO})$ & \\ \hline

parts\_auction & $\tf(PL,\textit{rdf:type},\textit{bro:Part\_List})$ & Q: $\tf(PL,\textit{rdf:type},\textit{bro:Part\_List}) $  \\
		   & &  E$^{+}$: \{$\tf(PL,\textit{rdf:type},\textit{bro:AvailablePL})$\}  \\ \hline

parts\_auction & $\tf(PL,\textit{rdf:type},\textit{bro:Part\_List})$ & Q: $and(\tf(O,\textit{rdf:type},\textit{bro:ApprovedPO}),$ \\ & & \ \ \ \ \ \ \ \ $\tf(PL,\textit{rdf:type},\textit{bro:Part\_List})) $  \\
		   & &  E$^{-}$: \{$ \tf(O,\textit{rdf:type},\textit{bro:ApprovedPO})$\}  \\ 
		   & &  E$^{+}$: \{$\tf(PL,\textit{rdf:type},\textit{bro:UnavailablePL}) $\}  \\

\hline

bill\_client & $  \tf(O,\textit{rdf:type},\textit{bro:ApprovedPO}) $ &  
Q: \textit{ true}  \\
  & &  E$^{+}$: \{$ \tf(i,\textit{rdf:type},\textit{bro:Invoice})$\}     \\ \hline

payment & $and(\tf(O,\textit{rdf:type},\textit{bro:ApprovedPO}),$  & Q: $and(\tf(O,\textit{rdf:type},\textit{bro:ApprovedPO}),$ \\ 
&\ \ \ \ \ \ \ \  $\tf(I,\textit{rdf:type},\textit{bro:Invoice})) $  &\ \ \ \ \ \ \ \  $\tf(I,\textit{rdf:type},\textit{bro:Invoice}))$   \\
		   & &  E$^{+}$: \{$\tf(O,\textit{bro:payment},\textit{I})) $\}	  \\  \hline
\end{tabular}
\begin{tabular}{|c|c|c|} \hline 
\textbf{Branch} & \textbf{Successor}   & \textbf{Guard}    \\ \hline 
g1 & g3 & $\tf(O,\textit{rdf:type},\textit{bro:ApprovedPO})$  \\ \hline
g1 & g2 & $\textit{not}( \tf(O,\textit{rdf:type},\textit{bro:ApprovedPO}))$  \\ \hline
g3 & parts\_auction & $(\tf(PL,\textit{rdf:type},\textit{bro:Part\_List})$ \\ \hline
g5 & g2 &  $\tf(O,\textit{rdf:type},\textit{bro:CancelledPO})$ \\ \hline
g5 & select\_shipper & $ \tf(O,\textit{rdf:type},\textit{bro:ApprovedPO})$ \\ \hline
\end{tabular}
}
\end{table*}
\end{center}

\subsubsection{Formal Semantics of Functional Annotations}
In the presence of functional annotations, the enactment of a BPS is
modeled by extending the $\textit{result}$ relation so as to
take into account the \textit{pre} and \textit{eff} relations. 
We only consider the case of task execution. 
The other cases are similar and will be omitted.

Given a state $S_1$,
a flow element $A$ can be enacted if $A$ is waiting for execution
according to the control flow semantics, and its enabling condition $C$ is satisfied, i.e., $\textit{holds}(C,S_1)$ is true. 
Moreover, given an annotation $\textit{eff}(A,Q,E^{-},E^{+},P)$, 
when $A$ is completed in a given state $S_1$, then a new state
$S_2$ is obtained by taking out from $S_1$ the set $E^{-}$ of
fluents and then adding the set $E^{+}$ of fluents. 
The execution of tasks considering functional annotations is then defined as:

\smallskip
\noindent \hangindent=0.3cm $\textit{result}(S_1,\textit{begin}(A),S_2)  \leftarrow \textit{task}(A) \wedge$
$\textit{holds}(\textit{cf}(B,A,P), S_1) \wedge
\textit{pre}(A,C,P) \wedge \textit{holds}(C,S_1) \wedge$
$\textit{update}(S_1, \{\textit {cf}(B,A,P)\}, \{\textit{en}(A,P)\}, S_2)$

\smallskip
\noindent \hangindent=0.3cm $\textit{result}(S_1,\textit{complete}(A),S_2)  \leftarrow \textit{task}(A) \wedge$
$\textit{holds}(\textit{en}(A,P), S_1) \wedge \textit{eff}(A,Q,E^{-},E^{+},P) \wedge$\\
$holds(Q,S_1) \wedge \textit{seq}(A,B,P) \wedge \textit{update}(S_1,\{\textit{en}(A,P)\}\cup E^{-},$
$\{\textit{cf}(A,B,P)\} \cup E^{+},S_2)$

\smallskip

\noindent
Note that, since the variables occurring in $E^{+}$ and $E^{-}$ are included in those of $Q$ , the evaluation of $holds(Q,S_1)$ binds these variables to constants.

Similarly, the semantics of inclusive and exclusive branches is extended to evaluate the associated guard expressions, in order to determine the set of successors to be enabled. The execution of decision points is then defined as:

\smallskip
\noindent \hangindent=0.3cm $\textit{result}(S_1,\textit{complete}(B),S_2)  \leftarrow
\textit{inc\_branch}(B) \wedge 
\textit{holds}(\textit{cf}(A,B,P),S_1) \wedge$ \makebox[8mm][l]{$\textit{setof}($}$\textit{cf}(B,C,P),$\\  
\hspace*{3mm}$(\textit{c\_seq}(G,B,C,P) \wedge \textit{holds}(G,S_1)), \textit{Succ}) \wedge$
$\textit{update}(I,\{\textit{cf}(A,B,P)\},\textit{Succ},S_2)$

\smallskip

\noindent \hangindent=0.3cm 
$\textit{result}(S_1,\textit{complete}(B),S_2)  \leftarrow
\textit{exc\_branch}(B) \wedge 
\textit{holds}(\textit{cf}(A,B,P), S_1) \wedge 
 \textit{c\_seq}(G,B,C,P) \wedge$ \\
$\textit{holds}(G,S_1) \wedge update(S_1,\{\textit{cf}(A,B,P)\},\{\textit{cf}(B,C,P)\},S_2)$

 \smallskip 

In order to evaluate a statement of the form
$\textit{holds}(\tf(s,p,o),X)$, where $\tf(s,p,o)$ is a fluent
and $X$ is a state, the definition of the $\textit{holds}$ predicate
given previously must be extended to take into account the axioms 
belonging to the reference ontology $\O$. 
Indeed, we want that a fluent of the form $\tf(s,p,o)$ be true in state $X$
not only if it belongs to $X$, but
also if it can be inferred from the fluents in $X$ and the axioms of the ontology. 

For instance, let us consider the fluent  
$f=\tf(o,\textit{rdf:type},\textit{bro:CancelledPO})$. 
We can easily infer that $f$ holds in a state that contains 
$\{\tf(o,\textit{rdf:type},\textit{bro:CancelledPO})\}$ (e.g., reachable after the execution of \textit{cancel\_order})  by using the rule $\textit{holds}(F,X) \leftarrow F\in X$.
However, by
taking into account the ontology excerpt given in Table \ref{tab:reference-onto},
we also want to be able to infer that $f$ holds in a state that contains
 $\{ \tf(o,\textit{rdf:type},\textit{bro:Purchase\_Order}),$
$\tf(o,\textit{bro:related},pl),$ $\tf(pl,\textit{rdf:type},\textit{bro:UnavailablePL})\}$  (e.g., a state reachable after the execution of \textit{parts\_auction}). 

In our framework the inference of new fluents from fluents belonging
to states is performed by including extra rules 
derived by translating the OWL 2 RL/RDF entailment rules as follows: 
every triple of the form
$t(s,p,o)$, where $s$ refers to an individual, is replaced by the atom
$\textit{holds}(\tf(s,p,o),X)$.
Below we show exemplary rules (in particular, those required by our running example)  for concept subsumption (1), role subsumption (2), 
domain restriction (3), transitive property (4), concept intersection\footnote{Without loss of generality, unlike \cite{owl}, we encode binary intersection  instead of a general \textit{n}-ary operator.} (5),  existentially quantified formulae (6), and concept disjointness (7) . We refer the reader to \cite{owl} for the complete list of rules and the discussion of  the OWL 2 RL rule-based semantics.       
 
 \smallskip

\pagebreak

\noindent \hangindent=0.6cm $1. \ \textit{holds}(\tf(S,\textit{rdf:type},C),X) \leftarrow 
\textit{holds}(\tf(S,\textit{rdf:type},B),X) \wedge   t(B,\textit{rdfs:subClassOf},C) $

\noindent \hangindent=0.6cm $2. \ \textit{holds}(\tf(S,P,O),X) \leftarrow \textit{holds}(\tf(S,P1,O),X) \wedge 
  t(P1,\textit{rdfs:subPropertyOf},P)$

\noindent \hangindent=0.6cm $3. \ \textit{holds}(\tf(S,\textit{rdf:type},C),X) \leftarrow  
 \textit{holds}(\tf(S,P,O),X) \wedge  t(P,\textit{rdfs:domain},C) $

\noindent \hangindent=0.6cm $4. \ \textit{holds}(\tf(S,P,O),X)\leftarrow 
  \textit{holds}(\tf(S,P,O_1),X) \wedge  \textit{holds}(\tf(O_1,P,O),X) \wedge   \\
 t(P,\textit{rdf:type},\textit{owl:TransitiveProperty}) $ 

\noindent \hangindent=0.6cm $5. \ \textit{holds}(\tf(S,\textit{rdf:type},C),X) \leftarrow t(C,\textit{owl:intersectionOf},(C_1,C_2)) \wedge  \\
\textit{holds}(\tf(S,\textit{rdf:type},C_1),X) \wedge  \textit{holds}(\tf(S,\textit{rdf:type},C_2),X) $ 

 
    
  \noindent \hangindent=0.6cm $6. \ \textit{holds}(\tf(S,\textit{rdf:type},C),X) \leftarrow t(S,\textit{owl:someValuesFrom,R)} \wedge t(R,\textit{owl:onProperty},P) \wedge 
  	holds(and(\tf(S,P,I) , \tf(I,\textit{rdf:type},R)),X)$
  
\noindent \hangindent=0.6cm $7. \ \textit{holds}(\textit{false},X) \leftarrow \textit{holds}(\tf(I_1,\textit{rdf:type},A),X) \wedge 
 \textit{holds}(\tf(I_2,\textit{rdf:type},B),X) \wedge \\
  t(A,\textit{owl:disjointWith},B) $

\smallskip

\noindent where \textit{false} is a term representing $\bot$.

We denote by $\mathcal A$ the set of rules that encode the terminological and functional annotations, that
is, (1) the OWL assertions of the form $ BpsEl : \SOME{\textit{termRef}}{Concept}$; (2) the facts defining the relations $\textit{pre}(A,C,P)$, $\textit{eff}(A,Q,E^{-},E^{+},P)$, $c\_seq(G,B,Y,P)$; 
(3) the rules for evaluating $\textit{holds}(\tf(s,p,o),X)$ atoms
(such as rules 1--7 above). 

\subsubsection{Change, Ramification and Consistency}

The logical formalization of activity preconditions and effects given above
has to be compared with various solutions to the \textit{Frame}  and 
\textit{Ramification} problems proposed by 
the various AI formalisms for representing action and change. 

The Frame Problem was formulated in \cite{frame-problem}
as the problem of ``expressing a dynamical domain
in logic without explicitly specifying which conditions are not affected by an
action". Basically, it is concerned with \textit{representational} issues, related to the effort needed to specify non-effects of actions, and \textit{inferential} issues, related to the
effort needed to actually compute these non-effects. 

The Fluent Calculus addresses both the representational and inferential aspects of the Frame Problem \cite{fluent-frame} by
modeling change as the difference between two states, caused by actions that deterministically result in a bounded number of direct (positive and negative) effects.  These effects are captured by
 \textit{state update axioms}  specifying the fluents that are added or removed from a state. 
The rules defining the \textit{result} relation introduced in Section \ref{section:trace-theory} can be viewed as a specialized form of state update axioms.

The Ramification Problem \cite{ramification} is the problem of representing and inferring information about indirect effects of actions. Indirect
effects are not explicitly represented in action specifications, but follow from general laws (domain axioms) describing dependencies among fluents. In our framework, general laws are specified in the reference ontology TBox, whose axioms, as discussed in the previous section, are used in the derivation of additional $\tf$ fluents from those belonging to a given state. 
Indirect effects may lead to undesired consequences when 
performing state update.
 For instance, let us consider the fluent  
$f=\tf(o,\textit{rdf:type},\textit{bro:FulfilledPO})$. If we consider the ontology $\O$ given in Table \ref{tab:reference-onto}, we can  infer that $f$ holds in a state $S$ which contains 
$\tf(o,\textit{rdf:type},\textit{bro:Purchase\_Order})$, $\tf(o,\textit{bro:related},i)$, and $\tf(i,\textit{rdf:type},\textit{bro:Invoice})$. Now, assume that the set of
negative effects of the subsequent activity $a$ includes the fluent $f$. 
Then, after the state update determined by $a$, $f$ still holds,
in contrast with the intended meaning of negative effects. 

Many approaches have been proposed to handle such a situation. Some of them are based on the computation of all the possible states $s_i$ caused by the execution of action $a$ in state $s$, such that: \textit{i)} they comply with the domain axioms and the negative effects of $a$, \textit{ii)} they differ minimally from $s$ (see, e.g., the Possible Model Approach - PMA \cite{pma}). This approach introduces a nondeterministic behavior in the state update that appears to be in contrast with the strong prescriptive nature of procedural BP models. Considering again the example above, the execution of $a$ according to the PMA would result in three states: $s - \{\tf(o,\textit{rdf:type},\textit{bro:Order}) \}$, $s - \{\tf(o,\textit{bro:related},i) \}$, and $s - \{\tf(i,\textit{rdf:type},\textit{bro:Invoice}) \}$.

Another solution proposed in the context of the Fluent Calculus, is based on \textit{causal propagations} regulated by \textit{causal relationships} \cite{causal-dependencies}, which specify how indirect effects are derived from direct effects and domain axioms. Causal relationships are then included in the state update axioms and applied until a fix-point is reached. This approach requires an additional formalism for the definition of causal relationships, and the burden for users of providing  additional domain-dependent assertions, which cannot be represented within the ontology. 

Here we follow a different approach based on the following 
{\em consistency condition},
which has to be enforced by every reachable state of a BPS: (i) no contradiction can be derived from the fluents belonging to the state
by using the state independent axioms of the reference ontology,
and (ii) no negative effect of an activity holds after its execution.
Formally, we say that $\textit{eff}$ is {\em consistent} with process $P$ if,
for every flow element $A$ and states $S_1,S_2,$ 
the following implication is true:

\smallskip
\noindent
{\em If}  $S_1$ is reachable from the initial state of $P$
and the relations
$\textit{result}(S_1,\textit{complete}(A),S_2)$ and $\textit{eff}(A,E^{-},E^{+},P)$ hold,
\\
{\em Then}~$\mathcal O \cup\mathcal A \cup 
\{\neg\textit{holds}(\textit{false}, S_2)\}$ is consistent
and, for all $F\in E^{-}$, $\mathcal O \cup \mathcal A \cup \{\neg \textit{holds}(F, S_2)\}$ is consistent.

\smallskip

This condition takes into account that,  since $\mathcal O \cup\mathcal A$
is a definite logic program, 
it only allows the derivation of \textit{positive} indirect effects,
and thus,  for all $F\in E^{+}$, $\mathcal O \cup \mathcal A \cup \{\textit{holds}(F, S_2)\}$ is consistent. 
We will show in Section \ref{section:reasoning} how the
consistency condition can be checked by using the rule-based
temporal logic we will present in the next section. 

From a pragmatic perspective, the modeler is asked to refine the annotation of a BPS until a consistent description of the effects is achieved, possibly disambiguating the situations where underspecified effects may lead to hidden flaws.   

\section{Temporal Reasoning} \label{section:temporal-reasoning}

In order to provide a general verification mechanism for behavioral properties, 
in this section we propose a model checking methodology based on a
formalization of the temporal logic 
CTL ({\em Computation Tree Logic}, see \cite{clarke} for a comprehensive overview)
as a set of rules. 
Model checking is a widely accepted technique for the 
formal verification of BP schemas, as their execution semantics  
is usually defined in terms of states and state transitions, and hence
the use of temporal logics for the specification and verification of properties is a very
natural choice~\cite{bpel-mc,compliance-ibm}. 

CTL is a propositional
temporal logic introduced for reasoning about the behavior of 
reactive systems. 
The behavior is represented as the tree of
states that the system can reach, and each path of this tree is
called a \emph{computation path}. CTL formulas are built from: the constants \textit{true} and \textit{false}; a given set \( \mathit{Elem} \) of \emph{elementary properties}; the connectives: \( \neg \) (`not') and \( \wedge  \) (`and'); the  linear-time operators along a computation path: \textbf{G} 
       (`globally' or `always'), \textbf{F}
       (`finally' or `sometimes'), \textbf{X} (`next-time'), and \textbf{U} (`until'); the quantifiers over computation paths: \textbf{A} (`for
all paths') and \textbf{E} (`for some path'). The abstract syntax of  CTL is defined as follows. 

\begin{definition} [CTL formulas] \label{def:ctl} A CTL formula \( F  \) has
the following syntax:\rm

\smallskip

\makebox[10mm]{$F \, ::=$}$e\, \, \, |\, \, \, \, \textit{true}\, \, \, |\, \, \, \,
\textit{false}\, \, \, |\, \, \, \,
\neg F \, \, \, |\, \, \, \, F _{1}\, \wedge \, F _{2}
\, \, \, |\, \, \, \textbf{ EX}(F )\, \, \, |\, \, \,  \textbf{EU}(F _{1},F _{2})\, \, \, |\, \, \, \textbf{EG}(F )$

\smallskip

\noindent \em where \( e \) belongs to a given set \( \mathit{Elem} \) 
of \emph{elementary properties}. 
\end{definition}
\noindent Other operators can be defined in terms of the ones given 
in Definition \ref{def:ctl}, e.g.,
\( \textbf{EF}(F)\, \equiv \, \textbf{EU}(\mathit{true},F ) \) and
\( \textbf{AG}(F )\, \equiv \, \neg \textbf{EF}(\neg F ) \)  \cite{clarke}.

Usually, the semantics of CTL formulas is defined by introducing a
\emph{Kripke structure} \( \mathcal{K} \), which represents the state space and 
the state transition relation, and by defining the
satisfaction relation \( \mathcal{K},s\models F  \), which
denotes that a formula \( F  \) holds in a state \( s \) of \(
\mathcal{K} \)~\cite{clarke}. 
In order to verify temporal properties of the behavior of a BPS $P$,
we define a Kripke structure associated with $P$. The states are defined as
finite sets of ground fluents and the state transition relation is based on 
the immediate reachability relation $r$ between states defined in Section~\ref{section:trace-theory}.
The Kripke structure and the satisfaction relation will be encoded by sets of rules, hence providing a uniform framework for reasoning about the ontological properties and the behavioral properties of business processes.

A Kripke structure is a four-tuple $\mathcal K = \langle \mathcal S,\mathcal I,\mathcal R, \mathcal L\rangle$ 
defined as follows. 
\begin{enumerate}

\item  $\mathcal S$  is the finite set of all states, where a state is a finite 
set of ground fluents. 

\item $\mathcal I$ is the {\em initial state} of BPS $P$,  encoded by the rule:

\smallskip

$\textit{initial}(I,P) \leftarrow\textit{bp}(P,S,E) \wedge  I = \{\textit{cf}(\textit{start},S,P)\} $
\smallskip

\item $\mathcal R$ is the {\em transition relation}, which is
defined as follows: $\mathcal  R(X,Y)$ holds iff $r(X,Y)$ holds, where
$r$ is the predicate defined in Section \ref{section:trace-theory},
i.e., $\mathcal  R(X,Y)$ holds iff 
there exists an action $A$ that can be executed in state $X$ leading to state $Y$.  

\item $\mathcal L$ is the {\em  labeling function}, which associates with each state $X$ 
the set of fluents $F$ such that $\mathcal O \cup \mathcal A \models \textit{holds}(F,X)$.
\end{enumerate}

\noindent
In the  definition of Kripke structure given in \cite{clarke}, the transition relation $\mathcal R$ is assumed to be {\em  total}, that is,
every state $S_1$ has at least one {\em successor state} $S_2$ for which 
$\mathcal R(S_1,S_2)$ holds. 
This assumption is justified by the fact that reactive systems can be
thought as ever running processes.
However, this assumption is not realistic in the case of business processes,
for which there is always at least one state with no successors, namely 
one where the \textit{end} event of a BPS has been completed. 
For this reason the semantics of the temporal operators given in \cite{clarke}, 
which refers to
\textit{infinite} paths of the Kripke structure, is suitably changed here, according to \cite{finite-transition-systems}, by taking into 
consideration \textit{maximal} paths, i.e., paths that are either infinite or end with a state
that has no successors, called a \textit{sink}. 

\begin{definition}[Maximal Path]
A {\em maximal path} in \( \mathcal{K} \) starting
from a state \( S_{0} \) is either 

\vspace*{-2mm}

\begin{itemize}
	\item an \emph{infinite} sequence of states \(S_{0}\, S_{1}\ldots \,  \) such that \( S_{i}\, \mathcal R\, S_{i+1} \), for every \( i\! \geq \! 0 \); or

\vspace*{-2mm}

	\item a \emph{finite} sequence of states $S_0\, S_1 \ \ldots \ S_k$, with $k \geq 0$, such that:	
\begin{enumerate}

\vspace*{-1mm}

	\item \( S_{i}\, \mathcal R\, S_{i+1} \), for every  $0 \leq i < k$, and
	\item there exists no state $S_{k+1} \in \mathcal S$ such that \( S_{k}\, \mathcal R\, S_{k+1} \).
\end{enumerate}
\end{itemize}

\vspace*{-2mm}

\end{definition}

\vspace*{-2mm}

\noindent
The semantics of CTL operators can be encoded by extending 
the definition of the predicate $\textit{holds}$. Below
we list the semantics of those operators
and the corresponding rule-based formalization. 

\smallskip

\noindent $\textbf{EX} (F)$ holds in state $S_0$ if \(F \) holds in a successor state of $S_0$:

\smallskip

\noindent \( \textit{holds}(\ex (F),S_0)\leftarrow \textit{r}(S_0,S_1) \wedge \textit{holds}(F,S_1) \)

\smallskip

\noindent $\textbf{EU} (F_1,F_2)$ holds in state $S_0$ if there
exists a maximal path \(\pi\): \( S_0\;S_1\ldots \,  \) 
such that for some \(S_n\) in \(\pi\) we have that $F _{2}$
holds in $S_n$ and,  for \( j =0,\ldots ,n\! -\! 1\),  $F _{1} $
holds in~$S_{j}$:

\smallskip

\noindent \( \textit{holds}(\eu(F_{1},F_{2}),S_0)\leftarrow \textit{holds}(F_{2},S_0) \)

 \noindent \hangindent=0.3cm \( \textit{holds}(\eu (F_{1},F_{2}),S_0) \leftarrow \textit{holds}(F_{1},S_0)\wedge \textit{r}(S_0,S_1) \wedge
\textit{holds}(\eu (F_{1},F_{2}),S_1) \)

\smallskip

\noindent $\textbf{EG} (F)$ holds in a state $S_0$ if there exists a 
maximal path $\pi$ starting from $S_0$ 
such that $F$ holds in each state of $\pi$.
Since the set of states is finite, $\textbf{EG}(F)$ holds in $S_0$ if
there exists a finite path $S_0 \ \ldots \ S_k$ such that, for $i=0,\ldots,k$,
$F$ holds in $S_i$, and either (1) $S_j=S_k$, for some $0\leq j < k$, or
(2) $S_k$ is  a sink state. 
Thus, the semantics of the operator $\textbf{EG}$ is encoded
by the following rules:

\smallskip

\noindent \hangindent=0.3cm \( \textit{holds}(\eg (F),S_0)\leftarrow \textit{fpath}(F,S_0,S_0) \)

\noindent \hangindent=0.3cm \( \textit{holds}(\eg (F),S_0)\leftarrow \textit{holds}(F,S_0) 
\wedge \textit{r}(S_0,S_1) \wedge \textit{holds}(\eg(F),S_1) \)

\noindent \hangindent=0.3cm \( \textit{holds}(\eg (F),S_0)\leftarrow \textit{sink}(S_0) \wedge \textit{holds}(F,S_0) \)

\smallskip

\noindent where: (i) the predicate  $\textit{fpath}(F,X,X)$ holds if there
exists a path from $X$ to $X$ itself, consisting of at least one $r$ arc, such that $F$ holds in every state on the path:

\smallskip

\noindent
\noindent \hangindent=0.3cm $\textit{fpath}(F,X,Y) \leftarrow
\textit{holds}(F,X) \wedge \textit{r}(X,Y)$

\noindent
\noindent \hangindent=0.3cm \( \textit{fpath}(F,X,Z)\leftarrow  \textit{holds}(F,X) \wedge 
\textit{r}(X,Y) \wedge \textit{ fpath}(F,Y,Z) \)

\smallskip
\noindent and (ii) the predicate $ \textit{sink}(X)$ holds if
$X$ has no successor state.

%
%
%

Finally,  the following rules define the properties characterizing the initial and the final state of a process:


\smallskip


\noindent   \( \textit{holds}(F,s_0(P))\leftarrow \textit{initial}(I,P) \wedge  \textit{holds}(F,I) \)

\noindent
 \( \textit{holds}(\textit{final}(P),X)\leftarrow \textit{bp}(P,S,E) 
\wedge  \textit{holds}(\textit{cf}(E,\textit{end},P),X) \)

\smallskip

\noindent
The rules defining the semantics of the operator $\textbf{EG}$ are similar to
the constraint logic programming definition proposed in \cite{lp-mc-cl2000}.
However, as already mentioned, in this paper we refer to the notion of
maximal path instead of infinite path.
Similarly to  \cite{lp-mc-cl2000}, our definition of the semantics
of $\textbf{EG}$ avoids the introduction of
greatest fixed points of operators on sets of states which are often
required by the  approach described in \cite{clarke}.
Indeed, the rules defining $\textit{holds}(\eg (F),S_0)$ are interpreted
according to the usual least fixpoint semantics (i.e.,  the
least Herbrand model \cite{lloyd}).

The encoding of the satisfaction relation for other CTL operators, e.g, $\textbf{EF}$ and
$\textbf{AG}$, follows from the  equivalences defining them \cite{clarke}. It is worth noting that in some special cases the assumption that paths are maximal, but not necessarily infinite, matters \cite{finite-transition-systems}. For instance,
if $S_0$ is a sink state, then $\textit{holds}(\textit{ag} (F),S_0)$ is true iff $\textit{holds}(F,S_0)$ is true, 
since the only maximal path starting from $S_0$ is the one constituted by $S_0$ only. 
Finally, we would like to note that the definition of the CTL semantics
given here is equivalent to the one in 
\cite{clarke} in the presence of infinite computation paths only. 

\section{Reasoning Services}
\label{section:reasoning}

Our rule-based framework supports several
reasoning services that can combine complex knowledge
about business processes from different perspectives, such as
the workflow structure, the ontological description, and the behavioral semantics.
In this section we will illustrate three such services:
verification, querying, and trace compliance checking.

Let us consider the following sets of rules:
(1)~$\mathcal {B}$, representing a set of BP schemas and the BP 
meta-model defined in Section \ref{section:bpal},
(2) $\mathcal T$,  defining the behavioral semantics presented in 
Section \ref{section:trace-theory},
(3) $\mathcal O$, collecting the OWL triples and rules
that represent the business reference ontology
defined in Section \ref{sec:enrichment}, 
(4) $\mathcal A$, encoding the annotations defined
in Sections~\ref{sec:TermAnn}
and~\ref{section:state-updates}, 
and (5) $\CTL$, 
defining the semantics of CTL presented in Section~\ref{section:temporal-reasoning}.

Let $\KB$ be the set of rules $\mathcal B \ \cup \ \mathcal T \ \cup 
\ \mathcal O\ \cup \ \mathcal A\ \cup \ \CTL$.  $\KB$ is called a
{\em Business Process Knowledge Base} (BPKB). 
It is straightforward to show that
$\KB$ is stratified, and hence its semantics is unambiguously 
defined by its perfect model $\textit{Perf}(\KB)$~(see Section \ref{subsec:lp}).

\subsection{Verification}
\label{section:verification}

In the following we present some examples of properties
that can be specified and verified in our framework.
A property is specified by a predicate $\textit{prop}$ defined by a rule $C$ in terms
of the predicates defined in $\KB$. The verification task is performed by
checking whether or not $\textit{prop} \in \textit{Perf}(\KB \cup \{C\})$.

\smallskip

\noindent (1) A very relevant behavioral property of a BP $p$ is that from any  
reachable state, it is possible to complete the process, i.e., reach the final state. 
This property, also known as \textit{option to complete} 
\cite{vander-workflow}, can be specified by the following rule, stating that
the property $\textit{opt\_com}$ holds if the 
CTL property $\textbf{AG}(\textbf{EF}(\textit{final}(p)))$ 
holds in the initial state of $p$:

\smallskip
\noindent \hangindent=3mm  $\textit{opt\_com}\leftarrow \textit{holds}(\ag(\ef(\textit{final}(p))),s_0(p))$



\smallskip



\noindent (2) Temporal queries allow us to verify the consistency 
condition for effects introduced in Section \ref{section:state-updates}. 
In particular, given a BPS $p$, 
inconsistencies due to the violation of some integrity constraint 
defined in the ontology by rules of the form
$\bot \leftarrow G$ (e.g., concept disjointness) 
can be verified by defining the \textit{inconsistency} property as follows:

\smallskip

\noindent \hangindent=3mm  $\textit{inconsistency}\leftarrow \textit{holds}(\ef(\textit{false}),s_0(p))$ 

\smallskip

\noindent (3) Another relevant property of a BPS is \textit{executability}
\cite{beyond-soundness},  according to which no activity reached by the control flow 
should be unable to execute due to some unsatisfied enabling condition. In our 
framework we can specify non-executability by defining a predicate \textit{n\_exec}
which holds if it can be reached a state where some activity $A$ is waiting for 
execution but is not possible to start its enactment. 

\smallskip
\noindent \hangindent=3mm  $\textit{n\_exec}\leftarrow \textit{holds}(\ef(\textit{and}(\textit{cf}(A1,A,p), \textit{not}(\ex(\textit{en}(A,p))))),s_0(p)) \wedge \textit{activity}(A)$ 

\smallskip

\noindent \hangindent=0cm (4) Temporal queries can also be used for the verification of 
\textit{compliance rules}, i.e.,  directives expressing internal policies and regulations 
aimed at specifying the way an enterprise operates. In our Handle Order example, one such compliance rule may be that every \textit{order} is eventually \textit{closed}. 
In order to verify whether this property holds or not, we can define a
\textit{noncompliance} property which holds if 
it is possible to reach the final state of the process where, 
for some $O$, it can be inferred that
$O$ is an \textit{order} which is not \textit{closed}. In our 
example \textit{noncompliance} is satisfied, and thus the compliance rule is not enforced. 
In particular, if the exception attached to the
\textit{accept order} task is triggered, the enactment continues with the \textit{notify 
rejection}  task (due to the guards associated to $g1$), and the order is never 
\textit{canceled} nor \textit{fulfilled}. 

\smallskip
\noindent \hangindent=3mm  $\textit{noncompliance}\leftarrow  \textit{holds}(
	\ef(
		\textit{and}(
			\tf(O,\textit{rdf:type},\textit{bro:Purchase\_Order}),\\ \textit{and}(
				\textit{not}(
					\tf(O,\textit{rdf:type},\textit{bro:ClosedPO})
				),\textit{final}(p)
			)
		)
	,s_0(ho))$

\smallskip

\subsection{Retrieval}
\label{section:query}
The inference mechanism based on SLG-resolution can be
used for computing boolean answers to ground queries, but also
for computing, via unification, substitutions for variables occurring in 
non-ground queries.
By exploiting this query answering mechanism we can easily provide,
besides the verification service described in the previous section, 
also reasoning services for the retrieval of process fragments.

The following queries show how process fragments can be retrieved according to
different criteria. For sake of readability, we introduce the relation $\sigma(A,C)$ as an abbreviations for the OWL expression $A:\SOME{\textit{termRef}}{C}$ encoding terminological annotations.

Query $q_1$ computes every activity $A$ 
performed by a $\textit{Carrier}$ and realizing a \textit{Transportation} (e.g., \textit{delivering}) ;
$q_2$ computes every decision point (exclusive branch) $G$ occurring along a path of a BPS $P$ delimited by two 
activities $A$ and $B$, where the former operates on \textit{orders} (e.g., \textit{create\_order})  and the latter  is included in the results of $q_1$; finally, $q_3$ retrieve all the activities operating on \textit{orders}  which  precede  (in every possible execution) a \textit{Transportation} performed by a $\textit{Carrier}$ (e.g., \textit{create\_order}).

\smallskip

\noindent \hangindent=3mm $q_1(A) \leftarrow activity(A) \wedge assigned(A,C,P) \wedge \sigma(C,\textit{bro:Carrier}) \wedge \sigma(A,\textit{bro:Transportation})$

\smallskip

\noindent \hangindent=3mm $q_2(A,G,B,P) \leftarrow q_1(B) \wedge output(A,I,P) \wedge \sigma(I,\textit{bro:Purchase\_Order}) \wedge reachable(A,G,P) \wedge reachable(G,B,P)$

\noindent \hangindent=3mm $q_3(A,B,P) \leftarrow q_1(B) \wedge output(A,I,P) \wedge \sigma(I,\textit{bro:Purchase\_Order}) \wedge  reachable(A,B,P) \wedge \textit{holds} (not(eu(not(en(A,P)),en(B,P))),s_0(P))$

\subsection{Trace Compliance}

The execution of a process is modeled as an \textit{execution trace}
(corresponding to a  plan in the Fluent Calculus), i.e.,  a sequence of actions of the form $[\textit{act}(a_1), \ldots, \textit{act}(a_n)]$ where \textit{act} is either \textit{begin} or \textit{complete}. The 
predicate  $\textit{trace}(S_1,\textit{T},S_2)$ defined below
holds if \textit{T} is a sequence of actions that lead from state $S_1$
to state $S_2$:

\smallskip

\noindent
$\textit{trace}(S_1,[~],S_2) \leftarrow S_1 = S_2$

\noindent
\hangindent=3mm$\textit{trace}(S_1,[A| \textit{T}],S_2)\! \leftarrow\!
\textit{result}(S_1,A,U) \!\wedge\!
\textit{trace}(U\!,\textit{T},S_2)$

\smallskip

 A \emph{correct trace} $T$ of a BPS $P$ is a trace that
leads from the initial state to the final state of $P$, that is:

\smallskip

\noindent \hangindent=3mm $\textit{ctrace}(T,P) \leftarrow \textit{initial}(I,P) \wedge \textit{trace}(I,T,Z) \wedge \textit{holds}(\textit{final}(P),Z)$

 \smallskip

Execution traces are commonly stored by BPM systems as process logs, 
representing the evolution of the BP instances that have been enacted.
The correctness of a trace $t$ with respect to a
given BPS $p$ can be verified by evaluating a query of the form 
$\textit{ctrace}(t,p)$
where $t$ is a ground list and $p$ is a process name.

The rules defining the predicate \textit{ctrace} can also be used to {\em generate}
the correct traces of a process $p$ that satisfy some given property. 
This task is performed by evaluating a query of the form 
$\textit{ctrace}(T,p) \wedge \textit{cond}(T)$,
where $T$ 
is a free variable and $\textit{cond}(T)$ is a property that 
$T$ must enforce. For instance, we may want to generate traces where the execution 
of a flow element  $a$ is followed by the execution of a flow element $b$:
   
\smallskip

\noindent \hangindent=3mm $\textit{cond}(T) \leftarrow \textit{concat}(T_1,T_2,T) 
\wedge \textit{complete}(a) \in T_1 \wedge \textit{complete}(b) \in T_2 $

\section{Computational Properties}
\label{sect:proof}

In this section we prove the
soundness, completeness, and termination of query evaluation
using SLG-resolution.
We also provide an upper bound to the time complexity
of query evaluation.

As mentioned in Section \ref{subsec:lp}, the
soundness and completeness of SLG-resolution with respect to
the perfect model semantics is guaranteed for
the class of queries that do not flounder.
In \cite{lloyd} a sufficient condition ensuring
that a query does not flounder is based on the notion
of {\em allowed} query and  rule.
In particular, a query is allowed if every variable occurring in it
also occurs in one of its positive literals.
Similarly, a rule is allowed
if every variable occurring in it
also occurs in a positive literal in
its body. 
Unfortunately, not all rules in $\KB$
are allowed in the sense of \cite{lloyd}. For instance,
the variables $F$ and $S$ occurring in the rule
$ \textit{holds}(not(F),S)\leftarrow \neg \textit{holds}(F,S) $,
do not occur in any positive
literal of the body.

We will now define a subclass of the
allowed queries whose evaluation with respect to 
$\KB$ does not flounder.
The definition of this subclass also takes into account
the left-to-right selection strategy for literals.
For any predicate defined in $\KB$, 
each argument denoting a state (i.e., a set of fluents)
can be classified either as an {\em input argument}
or as an {\em output argument}.
This classification is often called a {\em mode} \cite{Apt97}.
In particular, it can be shown that we can classify the
arguments such that the following property holds:
if a predicate is evaluated with all its input
arguments bound to ground sets of fluents,
then whenever the predicate succeeds 
all variables occurring in output arguments are bound to ground 
sets of fluents.
For instance, for the predicate
$\textit{result}(S_1,A,S_2)$ the first argument 
is an input argument and the
third argument is an output argument.
For reasons of space we do not list here, for
each predicate defined in $\KB$, the input or output 
classification of its arguments.
The following notion is adapted from \cite{Apt97}.

\begin{definition} 
A query $L_1 \wedge \ldots \wedge L_n$ is {\em well-moded} if,
for $i=1,\ldots,n,$ every variable occurring in an input argument
in $L_i$ occurs in an output argument in $L_j$, for some
$j\in \{1,\ldots,i-1\}$.
\end{definition}

\begin{definition} 
Let $f$ be (a term representing) a CTL formula. A subformula $e$ of $f$ is 
{\em grounding} if $e$ is a fluent and
one of the following conditions hold: 
(i) $f$ is a fluent and $e$ is $f$, 
(ii) $f$ is $\textit{and}(f_1,f_2)$ and $e$ is a grounding subformula of either $f_1$ or $f_2$,
(iii) $f$ is $\ex(f_1)$ and $e$ is a grounding subformula of $f_1$,
(iv) $f$ is $\eu(f_1,f_2)$ (or, in particular, $f$ is $\ef(f_2)$) and $e$ is a grounding 
subformula of $f_2$,
(v) $f$ is $\eg(f_1)$ and $e$ is a grounding subformula of $f_1$.
\end{definition}

\begin{definition} 
A query $Q$ of the form
$L_1 \wedge \ldots \wedge L_n$ is an
{\em NF}-query (short for {\em Non-Floundering} query)
if the following conditions hold:

\smallskip

\noindent
(1) $Q$ is well-moded,

\smallskip

\noindent
(2) for $i=1,\ldots,n,$ 
if $L_i$ is of the form $\textit{holds}(f,S)$, then all 
variables of $f$ occur in fluents that are subformulas of $f$, and 

\smallskip

\noindent
(3) for  each variable $X$ of $Q$, the leftmost occurrence $X_l$
of $X$ in $Q$ appears in a positive literal $L_j$, 
with $1\leq j\leq n$, 
such that either (3.1) $L_j$ has  predicate different from
`\textit{holds}'
or  (3.2)~$L_j=\textit{holds}(f,S)$ and $X_l$ 
appears in a grounding subformula of $f$. 

\smallskip

A rule of the form
$A\leftarrow L_1 \wedge \ldots \wedge L_n$
is an {\em NF-rule},  if the following conditions
hold:

\smallskip

\noindent
(4) no variable ranging over states occurs in $A$,

\smallskip

\noindent
(5) every variable occurring in $A$
also occurs in $L_1 \wedge \ldots \wedge L_n$,
and 

\smallskip

\noindent
(6) $L_1 \wedge \ldots \wedge L_n$ is an NF-query.
\end{definition}

\noindent
For example, the queries and rules presented in
Sections \ref{section:verification} and 
\ref{section:query} are all NF.
The query $\textit{holds}(\eu(\textit{en}(A,p), \textit{true}),s_0(p)) \wedge \neg \textit{task}(A)$ is not an
NF-query, because $\textit{en}(A,p)$ is not a grounding subformula of $\eu(\textit{en}(A,p), \textit{true})$.
This query flounders, as the non-ground negative literal
$\neg \textit{task}(A)$ will be selected after the success of
$\textit{holds}(\eu(\textit{en}(A,p), \textit{true}),s_0(p))$.

We assume that the query $Q$ is defined by a single NF-rule
$Q \leftarrow B$, where $B$ is a conjunction of literals.
The extension to the case where $Q$ is defined by a set of
NF-rules (like in Section \ref{section:query})
is straightforward.

\begin{proposition}\label{prop:nf}
Let $Q \leftarrow B$ be an NF-rule such that the predicate 
of $Q$ does not occur in $\KB$. 
Then we have the following properties.

\noindent
(1) The query $Q$  does not flounder with respect to
$\KB \cup \{Q \leftarrow B\}$.

\noindent
(2) Every answer for $Q$ with respect to
$\KB \cup \{Q \leftarrow B\}$ is a ground substitution
for the variables in $Q$.
\end{proposition}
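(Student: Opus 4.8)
The plan is to prove both claims at once through a single invariant that is preserved along every derivation issued from $Q$ and that makes each claim immediate. Call a query $G = L_1 \wedge \ldots \wedge L_m$ \emph{admissible} if it is well-moded with respect to the state moding and, in addition, satisfies conditions~(2) and~(3) of the definition of NF-query, read now as conditions on the arbitrary query $G$. I would prove that every query generable from $Q$ is admissible; the induction is well-founded on the local stratification of $\KB$ (so that the nested derivations attached to a selected $\neg A$ are also covered) and, within a stratum, on the length of the derivation. Two facts are then read off directly from admissibility. First, \emph{if the leftmost literal of an admissible query is a negated atom $\neg A$, then $A$ is ground}: condition~(3) forces the leftmost occurrence of every variable into a positive literal, so no variable may have its leftmost occurrence in $L_1 = \neg A$, whence $A$, being leftmost, is already ground; this gives part~(1). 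Second, \emph{when the derivation reaches the empty query $\textit{true}$, every variable of $Q$ has been bound to a ground term}, since each variable of $Q$ occurs in $B$ by condition~(5) of the NF-rule and its leftmost, hence grounding, occurrence is processed at some derivation step; this gives part~(2). The base case is immediate: the first step rewrites the atom $Q$ into $B$ via the only matching rule $Q \leftarrow B$, and $B$ is an NF-query by hypothesis, hence admissible.

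The inductive step is a case analysis on the leftmost literal $L_1$ of an admissible query. When $L_1$ is a negated atom, the first fact above shows it is ground, so rule~(N) applies legitimately and the tail $L_2 \wedge \ldots \wedge L_m$ inherits admissibility. When $L_1$ is a positive atom whose predicate is not $\textit{holds}$, I resolve it by rule~(P) with a rule of $\KB$; here I rely on a routine preliminary check that every non-$\textit{holds}$ rule of $\KB$ (the meta-model rules, and the rules defining $\textit{result}$, $r$, $\textit{update}$, $\textit{reachable}$, together with the ground ontology triples) is well-moded and has a body that is an admissible query once its head is instantiated with ground input state arguments. Because the base predicates are defined by ground facts and, by the stated moding, evaluating a predicate with ground input states grounds its output states, the selected $L_1$ has all its arguments ground on success and the resolvent is again admissible.

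The crucial case is $L_1 = \textit{holds}(f,S)$, which I handle through a \textbf{grounding lemma}: if $S$ is ground and every variable of $f$ lying outside the grounding subformulas of $f$ is already ground (which is exactly what admissibility guarantees via conditions~(2),(3)), then the evaluation of $\textit{holds}(f,S)$ does not flounder and every one of its answers binds all variables occurring in grounding subformulas of $f$ to ground terms. Given the lemma, the resolvent obtained from an admissible query is admissible and every query variable touched by $f$ is grounded, so the two facts above go through. The lemma is itself proved by induction on the stratum of the selected $\textit{holds}$-atom, and within a stratum by induction on the length of the subderivation, with a case analysis on the top symbol of $f$: a fluent is grounded by $\textit{holds}(F,X)\leftarrow F \in X$ against the ground set $X$; the cases $\textit{and}$, $\ex$, $\eu$, $\eg$ follow the corresponding rules, noting that each successor state produced by $r(S_0,S_1)$ is ground (by the state moding of $r$, available at the same or a lower stratum), and that for $\eu$ and $\eg$ the recursive $\textit{holds}$-call carries the same discipline, so the subderivation-length induction applies even though the formula does not shrink.

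The main obstacle is precisely this grounding lemma, for two reasons. First, the operators $\eu$ and $\eg$ recurse on the same formula, so the argument cannot rest on structural induction on $f$ alone and must be threaded through the stratification and the subderivation length; the asymmetry in the definition of grounding subformula---only $f_2$ counts for $\eu(f_1,f_2)$, while $\textit{not}(\cdot)$ and the negatively-defined operators such as $\ag$ contribute no grounding subformula at all---is exactly what keeps the bookkeeping sound, since in those positions variables are not bound constructively. Second, the mutual recursion between $\textit{holds}$ (through preconditions, guards, and the $\eu$/$\eg$ rules) and $r$/$\textit{result}$ must be resolved by one well-founded induction on the local stratification of $\KB$: negation-as-failure steps, including the nested derivations from the ground atom $A$ that the definition of \emph{generable} attaches to a selected $\neg A$, descend to a strictly lower stratum and are therefore covered by the induction hypothesis. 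Once these two points are organized, every remaining case is a direct, if tedious, check against the rules of $\KB$.
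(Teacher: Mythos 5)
Your treatment of part (1) coincides with the paper's: both rest on the invariant that the NF-query property (your ``admissibility'') is preserved by one-step derivations, so that every selected negated atom is ground and no floundering can occur; you merely make explicit the mode checks on the non-$\textit{holds}$ predicates and the double induction (on stratum, then on derivation length) that the paper's ``by cases on the form of $L$'' compresses into a single sentence. For part (2) you take a genuinely different route. The paper reduces (2) to (1) by contradiction: if an answer $\theta$ left a variable $X$ of $Q$ non-ground, one appends a literal $\neg R$ containing $X$ to the body, notes that $Q \leftarrow B \wedge \neg R$ is still an NF-rule, and extends the successful derivation until the non-ground $\neg R\theta$ is selected, contradicting non-floundering. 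You instead prove groundness of answers directly, through a grounding lemma for $\textit{holds}(f,S)$ asserting that success binds every variable occurring in a grounding subformula of $f$, combined with the moding discipline for the remaining predicates. Your route costs more machinery --- the lemma needs its own induction on stratum and on subderivation length, since $\eu$ and $\eg$ recurse on an unchanged formula --- but much of that machinery is already needed, implicitly, to carry out the case analysis behind the invariant in part (1), so you are largely paying up front for what the paper defers; in exchange you obtain a constructive account of where each binding comes from, whereas the paper's argument is shorter but hinges on the slightly delicate facts that $B \wedge \neg R$ remains an NF-query and that the derivation can be steered so as to select $\neg R\theta$. Both arguments are sketches at the same level of rigor regarding the asserted mode properties of $\KB$; neither has a gap the other avoids.
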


\begin{proof} (Sketch)
(1) Let us consider a one-step derivation
$L\wedge Q_1 \stackrel{\theta}{\longrightarrow}Q_2$.
By cases on the form of $L$ one can show that
if $L\wedge Q_1$ is an NF-query, then $Q_2$
is an NF-query.
Thus,  by also using
the fact that every ground atom is
an NF-query, if $\neg A \wedge Q_n$ is a query
generable from $Q$ in any number of steps, 
then $\neg A \wedge Q_n$ is an NF-query.
Therefore,  all variables
occurring in $\neg A$ must also occur in a positive literal
to the left of $\neg A$, and  hence $\neg A$ is a ground atom.

(2) Suppose, by contradiction, that an answer $\theta$ for $Q$
is not a ground substitution. Let us consider the rule
$Q \leftarrow B\wedge \neg R$, where $R$ is any atom
containing one of the variables that are not bound to
a ground term in $\theta$. 
$Q \leftarrow B\wedge \neg R$ is an NF-rule.
We can construct a derivation from $Q$ that
eventually selects the non-ground literal $\neg R\theta$,
and hence the query $Q$  flounders with respect to
$\KB \cup \{Q \leftarrow B \wedge \neg R\}$.
\end{proof}

Let us now show that the evaluation of every NF-query
terminates by using SLG resolution.
Given an atomic query $Q$, we define:

\begin{itemize}

\item $\textit{Calls}_Q$ as the least set of atoms satisfying the following properties:

\smallskip

\makebox[6mm][l]{(1)}$Q \in \textit{Calls}_Q$;

\makebox[6mm][l]{(2)}if $A\in \textit{Calls}_Q$ and 
either
$A \stackrel{\theta_1}{\longrightarrow} \ldots \stackrel{\theta_n}{\longrightarrow}  A' \wedge Q'$
or $A \stackrel{\theta_1}{\longrightarrow} \ldots \stackrel{\theta_n}{\longrightarrow}  \neg A' \wedge Q'$,

\makebox[6mm][l]{}then $A'\in \textit{Calls}_Q$;

\item $\textit{Answers}_Q$
as the set of atoms $A \theta$ such that $A\in \textit{Calls}_Q$ and
$\theta$ is an answer for $A$;

\item $\Delta_Q$ as $\textit{Calls}_Q\cup \textit{Answers}_Q$.

\end{itemize}

\noindent
The termination proof is based on the property that, 
for any query $Q$, 
$\Delta_Q$ is a finite set of atoms.
This property is equivalent to 
the {\em bounded-term-size property} that in~\cite{tabling}
has been shown to be a sufficient condition for termination of 
SLG-resolution~\cite{tabling}.

Given a set $S$, by $|S|$ we denote the cardinality of $S$.
Let $P$ be a logic program, by $\Pi_P$ we
denote the maximum number of literals in the body of a
rule in $P$. 
The following result is an adaptation of Theorem 5.4.3
in~\cite{tabling}.

\begin{theorem}[Termination of SLG-resolution] 
\label{th:termination}
Let $P$ be a logic program and $Q$ be an atomic query.
Suppose that there exists a finite set $\mathcal D$ of atoms such
that $\Delta_Q\subseteq \mathcal D$. 
Then all answers for $Q$ can be 
computed by SLG-resolution in 
$\mathcal O(|P|
\times |\mathcal D|^{\Pi_P+1})$ steps.
\end{theorem}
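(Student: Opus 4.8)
The plan is to adapt the counting argument behind Theorem~5.4.3 of \cite{tabling} to the concrete one-step derivation relation of Section~\ref{subsec:lp}. The hypothesis $\Delta_Q\subseteq\mathcal D$ with $\mathcal D$ finite supplies two finiteness facts simultaneously: the set $\textit{Calls}_Q$ of atoms ever selected is finite, with $|\textit{Calls}_Q|\leq|\mathcal D|$, and the set $\textit{Answers}_Q$ of recorded answers is finite, with $|\textit{Answers}_Q|\leq|\mathcal D|$. As remarked in the text, this is exactly the bounded-term-size property that \cite{tabling} identifies as sufficient for termination, so termination itself is immediate and the real work lies in extracting the quantitative step bound.

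First I would invoke the defining property of tabled evaluation: each atom in $\textit{Calls}_Q$ is resolved against the program clauses only once, its answers being stored and reused thereafter. This lets me organize the whole computation as an outer loop over the at most $|\mathcal D|$ distinct calls. For a fixed call $A$ and a fixed rule $H\leftarrow K_1\wedge\ldots\wedge K_m$ of $P$, the derivations unify $A$ with $H$ via rule~(P) and then resolve the body literals left to right, each $K_i$ being matched either against a tabled answer (rule~(P)) or, when $K_i$ is a ground negated atom, against the failure of a tabled call (rule~(N)). Since every answer lies in $\mathcal D$, each body literal admits at most $|\mathcal D|$ matches, so the number of partial derivations generated by this rule is $\sum_{i=0}^{m}|\mathcal D|^i=\mathcal O(|\mathcal D|^{\Pi_P})$, the geometric sum being dominated by its last term because $m\leq\Pi_P$.

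Multiplying the three counts then yields the theorem: at most $|\mathcal D|$ calls, at most $|P|$ rules per call, and $\mathcal O(|\mathcal D|^{\Pi_P})$ one-step derivations per call–rule pair, for a total of $\mathcal O(|P|\times|\mathcal D|^{\Pi_P+1})$ steps. For the negative literals I would lean on the stratification that makes rule~(N) well-founded: when a ground literal $\neg A'$ is selected, the subsidiary evaluation of $A'$ is itself one of the calls already enumerated in $\textit{Calls}_Q$, so it is charged to a single entry of the outer loop rather than recomputed at every occurrence.

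The main obstacle I expect is not the arithmetic but the faithful accounting of tabling. I must argue precisely that the abstract ``outer loop over calls, inner resolution over answers'' picture matches the one-step derivation semantics of Section~\ref{subsec:lp}, and in particular that no call is ever expanded against its program clauses more than once despite being reached from many places. The delicate point is the dependency of rule~(N) on the \emph{entire} set of derivations from $A'$: I would need to check that, under the stratification of the program, this subsidiary evaluation can be completed and tabled before $\neg A'$ is resolved, so that its cost is absorbed into the already-counted evaluation of the call $A'$ and contributes no further steps.
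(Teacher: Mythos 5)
The paper offers no proof of this theorem: it is imported verbatim as ``an adaptation of Theorem 5.4.3 in \cite{tabling}'' (Chen--Warren), so there is no in-paper argument to compare against. Your reconstruction is the standard counting argument that underlies that cited result --- at most $|\mathcal D|$ tabled calls, each resolved once against at most $|P|$ clauses, with the partial resolvents of an $m$-literal body indexed by tuples of tabled answers, giving $\sum_{i=0}^{m}|\mathcal D|^{i}=\mathcal O(|\mathcal D|^{\Pi_P})$ nodes per call--clause pair --- and it is sound, including your observation that the subsidiary evaluation triggered by a ground negative literal is itself a member of $\textit{Calls}_Q$ (well-founded by stratification) and so is charged to the outer loop rather than recounted.
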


By applying Theorem~\ref{th:termination}
to the case where $P$ is of the form
$\KB\cup \{q(X) \leftarrow B\}$,
we get the following result.

\begin{proposition}\label{prop:complexity}
Suppose that $q(X) \leftarrow B$ is an NF-rule, where $X$ is a tuple
of $k\geq 0$ variables and the predicates of $B$ are defined in
$\KB$. Then, all answers for $q(X)$ can be 
computed by SLG-resolution in 
$\mathcal O(|\mathcal {KB}|
\times (|\mathcal F|^k +
(|\!| B|\!| \times |\mathcal F|^v \times|\mathcal S|) +
|\mathcal S|^m)^{r+1})$ steps,
where: 
(i) $\mathcal F$ is the set of ground fluents that can be 
defined in $\KB$,
(ii) $\mathcal S$ is the set of possible states,
that is, the powerset of $\mathcal F$,
(iii) $|\!| B|\!|$ denotes the size 
(that is, the number of symbols) of $B$, 
(iv) $v$ is the largest
number of variables in a CTL formula in $B$, 
(v) $m$ is the largest arity of a predicate in
$\KB$, and 
(vi) $r$ is the largest number of literals
in the body of a rule in $\KB\cup \{q(X) \leftarrow B\}$.
\end{proposition}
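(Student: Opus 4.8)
The plan is to obtain Proposition~\ref{prop:complexity} as a direct instance of Theorem~\ref{th:termination}, applied to the program $P = \KB \cup \{q(X) \leftarrow B\}$ and the atomic query $Q = q(X)$. Once this identification is made, two of the three factors are immediate: $|P| = \mathcal O(|\KB|)$, since $P$ differs from $\KB$ only by the single rule $q(X)\leftarrow B$, and $\Pi_P = r$, exactly by clause~(vi) of the statement. The entire content of the proof therefore reduces to exhibiting a finite set $\mathcal D$ with $\Delta_Q \subseteq \mathcal D$ and $|\mathcal D| = \mathcal O(|\mathcal F|^k + |\!| B|\!| \times |\mathcal F|^v \times |\mathcal S| + |\mathcal S|^m)$, after which the bound is read off from $\mathcal O(|P|\times|\mathcal D|^{\Pi_P+1})$.

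First I would invoke Proposition~\ref{prop:nf}: because $q(X)\leftarrow B$ is an NF-rule with a fresh head predicate, $Q$ does not flounder and every answer is ground. Together with well-modedness this guarantees that whenever an atom enters $\textit{Calls}_Q$, its input arguments are already bound to ground sets of fluents, hence to elements of $\mathcal S$, and its input fluent/term arguments to elements bounded by $\mathcal F$. This groundness is precisely what keeps the forthcoming count finite, and it is the bridge that lets the NF-machinery of the previous section feed into the complexity estimate.

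Next I would partition $\Delta_Q = \textit{Calls}_Q \cup \textit{Answers}_Q$ by head predicate into three families and bound each. (a) Ground instances of $q$: since answers are ground and each of the $k$ arguments ranges over a set bounded by $|\mathcal F|$, there are $\mathcal O(|\mathcal F|^k)$ of them. (b) Atoms $\textit{holds}(f,S)$: inspecting the rules for $\textit{holds}$ — the propositional clauses, the CTL clauses for $\ex$, $\eu$, $\eg$ together with the auxiliary $\textit{fpath}$ and $\textit{sink}$, and the ontology clauses (1)--(7) of Section~\ref{section:state-updates} — one checks that $f$ is always (a variant of) a subformula of the CTL formula occurring in $B$, giving $\mathcal O(|\!| B|\!|)$ structurally distinct choices; each carries at most $v$ variables instantiated over $\mathcal F$, while $S$ ranges over $\mathcal S$, for a total of $\mathcal O(|\!| B|\!|\times|\mathcal F|^v\times|\mathcal S|)$. (c) Every remaining atom defined in $\KB$ (such as $\textit{result}$, $r$, $\textit{reachable\_state}$, $\textit{seq}$) has arity at most $m$, and in the worst case all its arguments are states, yielding $\mathcal O(|\mathcal S|^m)$. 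Letting $\mathcal D$ be the union of these three families and substituting into Theorem~\ref{th:termination} gives the claimed bound.

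The main obstacle is family (b): one must establish a closure property stating that the evaluation of $\textit{holds}$ never produces a CTL formula outside the finite set of subformulas of the query's formula, and never introduces more than $v$ live variables, despite the mutual recursion through the temporal rules and the recursive ontology-level $\textit{holds}$ rules. Arguing that the formula argument only ever shrinks to a subformula, and that variable creation is confined to instantiating the $v$ formula-variables over $\mathcal F$, is what justifies the factor $|\!| B|\!|\times|\mathcal F|^v\times|\mathcal S|$; by contrast, families (a) and (c) are routine counting once groundness of the input arguments, secured by Proposition~\ref{prop:nf}, is in hand.
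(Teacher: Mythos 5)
Your proposal is correct and follows essentially the same route as the paper: both apply Theorem~\ref{th:termination} to $\KB\cup\{q(X)\leftarrow B\}$ with $\Pi_P=r$ and bound $\Delta_Q$ by a three-part set $\mathcal D$ consisting of instances of $q$, instances $\textit{holds}(f,s)$ with $f$ an instantiated subformula of a formula in $B$, and instances of the remaining $\KB$-predicates, yielding exactly the three summands $|\mathcal F|^k$, $|\!|B|\!|\times|\mathcal F|^v\times|\mathcal S|$, and $|\mathcal S|^m$. Your explicit appeal to Proposition~\ref{prop:nf} for groundness and your remarks on the subformula-closure of the $\textit{holds}$ rules make explicit what the paper leaves implicit in its definition of $\mathcal D$, but the argument is the same.
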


\begin{proof}
Suppose that $Q$ is the query $q(X)$ defined
by the NF-rule  $q(X) \leftarrow B$, where $X$ is tuple
of $k\geq 0$ variables and the predicates of $B$ are defined in
$\KB$.
Let us define the following set $\mathcal D$ of atoms, 
where $\mathcal V$
is a finite, sufficiently large set of variables, and
$\mathcal E$ is the set of flow elements in $\KB$.

\smallskip

\noindent
\makebox[7mm][l]{$\mathcal D=$}$\{q(t) \mid t\in 
(\mathcal E \cup \mathcal F \cup \mathcal V)^k \}\ \cup$

\smallskip

\noindent
\makebox[7mm][l]{}$\{\textit{holds}(f,s) \mid  f=f'\theta$,
for some CTL-formula $f'$ occurring as a subformula in a literal\linebreak
\makebox[34mm][l]{}of $B$ and substitution $\theta$ from variables to fluents, and
$s\in \mathcal S\}\ \cup$

\smallskip

\noindent
\makebox[7mm][l]{}$ \{ p(u) \mid p$ ($\neq \textit{holds}$) is an $m$-ary predicate  defined in $\KB$
and $u\in (\mathcal E \cup \mathcal F \mathcal \, \cup \, 
\mathcal S\ \cup\ \mathcal V)^m\}$

\smallskip

\noindent
Additionally, we assume that no two atoms in $\mathcal D$
are variants of each other.

We have that
$|\mathcal D| \leq (|\mathcal E| + |\mathcal F| + 1)^k +
(|\!| B|\!| \times |\mathcal F|^v \times|\mathcal S|) +
(|\mathcal E| + |\mathcal F| + |\mathcal S| + 1)^m $.
The fluents in $\mathcal F$ are defined by using the
elements  in $\mathcal E$, 
the constants from the ontology 
(which also occur in $\KB$), and
the function symbols \textit{cf}, \textit{en}, $t_f$,
and \textit{wrtn}, and hence $|\mathcal E|\leq |\mathcal F|$.
Moreover, $|\mathcal S|=2^{|\mathcal F|}$.
Thus, $|\mathcal D| \in \mathcal O(|\mathcal F|^k +
(|\!| B|\!| \times |\mathcal F|^v \times|\mathcal S|) +
|\mathcal S|^m) $. By Theorem \ref{th:termination}, 
we get the thesis.
\end{proof}

By using Propositions \ref{prop:nf} and \ref{prop:complexity}, 
we get the following result.

\begin{theorem}[Termination, Soundness, and Completeness of Query Evaluation in $\KB$]\label{thm:correctness}
Let $Q \leftarrow B$ be an NF-rule such that the predicate 
of $Q$ does not occur in $\KB$. 
Then: 

\noindent
(1)~the evaluation of $Q$ with respect to 
$\KB \cup \mathcal \{Q \leftarrow B\}$ using SLG-resolution
terminates;

\noindent
(2) $Q$ succeeds with answer $\theta$ iff 
$Q\theta\in\textit{Perf}(\KB\cup\mathcal \{Q \leftarrow B\})$;

\noindent
(3) for a ground rule of the form
$\textit{prop} \leftarrow \textit{holds}(f,s)$,
the evaluation of $\textit{prop}$ by using SLG-resolution terminates
in polynomial time in $|\!| f|\!| \times |\mathcal S|$.
\end{theorem}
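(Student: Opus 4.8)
The plan is to derive all three parts from the machinery already in place---Proposition~\ref{prop:nf}, Theorem~\ref{th:termination}, and Proposition~\ref{prop:complexity}---together with the soundness and completeness of the operational semantics with respect to the perfect model recalled in Section~\ref{subsec:lp}. A preliminary fact used throughout is that $\KB\cup\{Q\leftarrow B\}$ is stratified: since the predicate of $Q$ does not occur in $\KB$, any stratification of $\KB$ extends to the whole program by placing $Q$ in a sufficiently high stratum, so $\textit{Perf}(\KB\cup\{Q\leftarrow B\})$ is well defined. Part~(1) is then immediate from Proposition~\ref{prop:complexity}: the set $\mathcal F$ of ground fluents definable in $\KB$ is finite, hence $\mathcal S=2^{\mathcal F}$ is finite, so the step count provided by that proposition is a finite number, and a finite bound on the number of SLG-resolution steps is exactly termination.

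For Part~(2) I would combine Proposition~\ref{prop:nf} with the operational characterisation of $\textit{Perf}$ from Section~\ref{subsec:lp}. By Proposition~\ref{prop:nf}(1) the query $Q$ does not flounder with respect to $\KB\cup\{Q\leftarrow B\}$, so both directions of that characterisation apply. In the forward direction, if $Q$ succeeds with answer $\theta$, then Proposition~\ref{prop:nf}(2) forces $\theta$ to ground every variable of $Q$, so $Q\theta$ is ground and soundness yields $Q\theta\in\textit{Perf}(\KB\cup\{Q\leftarrow B\})$. In the converse direction, if $Q\theta\in\textit{Perf}$, completeness gives a successful derivation with answer $\theta'$ more general than $\theta$; by Proposition~\ref{prop:nf}(2) this $\theta'$ is itself ground, and being more general than $\theta$ while grounding all the variables of $Q$ it must coincide with $\theta$ on those variables, so $Q$ succeeds with answer $\theta$.

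Part~(3) follows by specialising Proposition~\ref{prop:complexity} to the rule $\textit{prop}\leftarrow\textit{holds}(f,s)$. Since $\textit{prop}$ is nullary we have $k=0$ and $|\mathcal F|^{k}=1$, and since $f$ and $s$ are ground the CTL formula $f$ has no variables, so $v=0$, $|\mathcal F|^{v}=1$, and $|\!| B|\!|=\mathcal O(|\!| f|\!|)$. The bound of Proposition~\ref{prop:complexity} therefore collapses to $\mathcal O(|\KB|\times(|\!| f|\!|\times|\mathcal S|+|\mathcal S|^{m})^{r+1})$. Because $\KB$ is fixed, its largest predicate arity $m$ and its largest rule-body length $r$ are constants, and moreover $|\mathcal F|=\log_{2}|\mathcal S|\leq|\mathcal S|$; hence every occurrence of $|\mathcal F|$ is already polynomial in $|\mathcal S|$, and the whole expression is a polynomial in $|\!| f|\!|\times|\mathcal S|$, which is the claim.

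The step I expect to be the main obstacle is Part~(2). The characterisation recalled in Section~\ref{subsec:lp} is phrased for the abstract, nondeterministic operational semantics, whereas the theorem speaks of SLG-resolution; I must therefore invoke the tabling results of \cite{tabling} to argue that, under the finiteness of $\Delta_Q$ already exploited for termination, SLG-resolution computes exactly the set of answers of that operational semantics. The remaining subtlety is purely book-keeping: reconciling the ``more general answer'' furnished by completeness with the groundness of every answer guaranteed by Proposition~\ref{prop:nf}(2), which is what turns the iff in Part~(2) into an honest equality of answers rather than a containment up to instantiation.
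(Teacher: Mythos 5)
Your proof is correct and follows essentially the same route as the paper's: part (1) from Proposition~\ref{prop:complexity}, part (2) from Proposition~\ref{prop:nf} together with the soundness and completeness of SLG-resolution for non-floundering queries recalled in Section~\ref{subsec:lp}, and part (3) by specialising Proposition~\ref{prop:complexity} to $k=v=0$ with $m$ and $r$ constant. The paper's own proof is just a terser version of yours; the extra book-keeping you supply (groundness of answers reconciling the ``more general'' clause, and $|\mathcal F|\leq|\mathcal S|$) is consistent with what the paper leaves implicit.
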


\begin{proof}
(1) The termination of query evaluation has been proved in Proposition \ref{prop:complexity}.

\smallskip

\noindent
(2) The soundness and completeness of query evaluation
follows from Proposition \ref{prop:nf} and
from the soundness
and completeness of SLG-resolution for non-floundering
queries recalled in Section \ref{subsec:lp}.

\smallskip

\noindent
(3)  If we consider the ground rule 
$\textit{prop} \leftarrow \textit{holds}(f,s)$, then
in Proposition \ref{prop:complexity} we have $k=v=0$.
Since $m$ and $r$ do not depend on $f$ or $\mathcal S$,
we get the thesis.
\end{proof}

Proposition \ref{prop:complexity} above only gives a
loose upper bound on the complexity of query evaluation.
However, it is sufficient for showing that, in line with the complexity
of the CTL verification problem \cite{clarke}, 
our verification method
has polynomial running time with respect to the
number of  states that are potentially reachable during 
process enactment.
Moreover, Theorem \ref{thm:correctness}
shows that the use of  OWL 2 RL
elementary properties does not add more
than polynomial complexity.
A tighter complexity analysis could be
done by directly analyzing the evaluation
of queries with respect to $\KB$, instead of relying,
as done above, on the general results provided by \cite{tabling}.

In practice, our fluent-based representation
of the behavioral semantics determines a running time which is
polynomial in the number of flow elements that are concurrently
enacted plus the number of fluents that are added to states by
functional annotations. Usually, this number is much smaller
than the cardinality of the powerset of $\mathcal F$.
Indeed, the experimental results reported in Section
\ref{sect:experimentation}
show that verification and querying are feasible 
for medium sized, non-trivial processes.

The termination of trace correctness checking can be
proved under assumptions similar to the ones of Theorem \ref{thm:correctness}.
However, stronger assumptions are needed for the termination of
trace generation in the case where we want to compute the set of
{\em all} correct traces satisfying a given condition, as this set may
be infinite in the presence of cycles.

\section{Implementation}
\label{sect:implementation}
In the following we describe the BPAL Platform, a prototypical implementation of the
framework discussed so far (Section \ref{sect:tool}), and we then discuss  an experimental evaluation
of the reasoner performances (Section \ref{sect:experimentation}). 

\subsection{Tool Description}
\label{sect:tool}

The BPAL platform\footnote{A video demonstration is available at \url{http://www.youtube.com/watch?v=xQkapzjhO7g}} is implemented as an Eclipse Plug-in\footnote{\url{http://www.eclipse.org/}}, whose main components are depicted in the functional view in Figure \ref{fig:functional-view}.  It provides the
{\em BPKB Editor} to assist the user through a graphical interface in the definition of a
BPKB, and the {\em BPAL Reasoner},
based on an LP engine, able to operate on the BPKB through the query language {\em QuBPAL}, designed for interrogating a repository of semantically enriched BPs. 
\vspace{-0.2cm}
\begin{center}
\begin{figure}[htbp!]	
\centering
  \includegraphics[width=10cm]{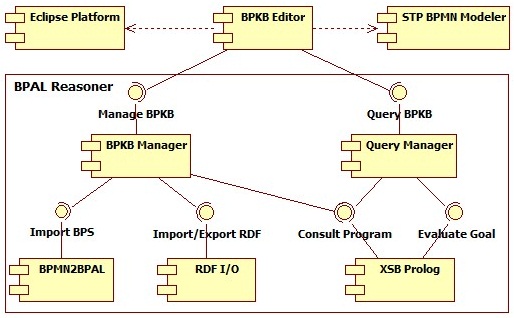}
	\caption{Functional view of the BPAL platform}
	\label{fig:functional-view}
\end{figure}	
\end{center}

\vspace{-1.2cm}
\subsubsection{Query Language} 

QuBPAL is an expressive query language for a BPKB based on the  theoretical framework presented in this paper (a preliminary specification has been discussed in \cite{bpal-bsme}). 
It does not require the user to understand the technicalities of the underlying LP platform,  since QuBPAL queries are SELECT-WHERE statements intended to be automatically translated to logic programs, and then evaluated by using standard LP engines.

The SELECT statement defines the output of the query evaluation, which can be a boolean value, variables occurring in the WHERE statement, and a process selector representing either a BPS or a BPS fragment.  The WHERE statement specifies an expression that restricts the set of data returned by the query, built from the set of the predicates defined in the BPKB (including CTL operators) and  the connectives AND, OR, NOT, and the predicate = with the standard logic semantics. In the queries we use question mark to denote variables (e.g., \textit{?x}), and we
use the notation \textit{?x::C} to indicate the terminological annotation of a variable, i.e., $x:\SOME{\textit{termRef}}{C}$. 

It is worth noting that the representation of OWL/RDFS resources as sets of triples, which directly encode the underlining RDF graph, allows us to pose queries over the ontology in a form very close to the SPARQL (SPARQL Protocol and RDF Query Language) standard \cite{sparql}, defined by the World Wide Web
Consortium and widely accepted in the Semantic Web community. SPARQL is in fact designed to query RDF resources, that essentially are organized as directed and labeled graphs, by matching graph pattern over RDF graphs. Graph patterns are in turn specified as triples where variables can occur in every position (i.e., atoms of the form $t(a_1,a_2,a_3)$), along with their conjunctions and disjunctions. In this sense, while providing additional \textit{primitives} to be used specifically for querying BPs, the ontology-related reasoning is specified in a  QuBPAL query accordingly to consolidated Semantic Web standards.   

To provide some insights about the language, we report in the following two examples of QuBPAL queries.  The first one  represents the formulation of the verification criteria for the compliance rule discussed at Point (4) of Section \ref{section:verification}. The second one is the QuBPAL translation of the query $q_3$ discussed in Section \ref{section:query}.

\smallskip

$\mathtt{SELECT\ <>}$

\hspace*{6mm}
$\mathtt{WHERE\ [EF\ (final(ho)\ AND\ t(?o,rdf:type,bro:PurchaseOrder)\  AND\ NOT}$ \\  
\hspace*{11mm}
$\mathtt{t(?o,rdf:type,bro:ClosedPO) |\ ho]}$

\smallskip

$\mathtt{SELECT\ ?a\ ?p}$

\hspace*{6mm}
$\mathtt{WHERE\  output(?a,?i::bro:Purchase\_Order,?p)\ AND\ reachable(?a,?b,?p)\  AND}$ \\  
\hspace*{11mm}
$\mathtt{activity(?b::bro:Transportation)\ AND\ assigned(?b,?c::bro:Carrier,?p)}$ \\
\hspace*{11mm}
$\mathtt{AND\ [\ NOT\ EU\ (\ NOT\  en(?a,?p),\ en(?b,?p)\ )\ |\ ?p]}$ 

\subsubsection{BPKB Editor} 
This component provides a graphical user interface to define a BPKB and to interact with the BPAL Reasoner. A screen-shot of the main components of the GUI is depicted in Figure \ref{fig:gui}.
\begin{itemize}
\vspace{-1mm}
	\item The left panel (Figure \ref{fig:gui}.a) is the Package Explorer, providing a tree view of the resources available in the workspace, including BP schemas and ontologies.
\vspace{-1mm}
	\item  The central panel (Figure \ref{fig:gui}.b) is the BP Modeling View, based on the  STP BPMN Modeler\footnote{\url{http://www.eclipse.org/soa}}, comprising an editor and a set of tools to model business process diagrams using the BPMN notation.
\vspace{-1mm}
	\item On the left (Figure \ref{fig:gui}.c), the Ontology View allows for the visualization of OWL ontologies, published on the Internet or locally stored.
\vspace{-1mm}
	\item The bottom panel (Figure \ref{fig:gui}.d) is the Annotation View, an editor for the annotation of process elements with respect to the reference ontology.
\vspace{-1mm}
	\item The top-central panel (Figure \ref{fig:gui}.e) is the QuBPAL View, that provides a query prompt to access  the BPAL reasoner through the query mechanism. Results can be consulted in the result panel (Figure \ref{fig:gui}.f).
\end{itemize}

\begin{figure}
\centering
  \includegraphics[width=15cm]{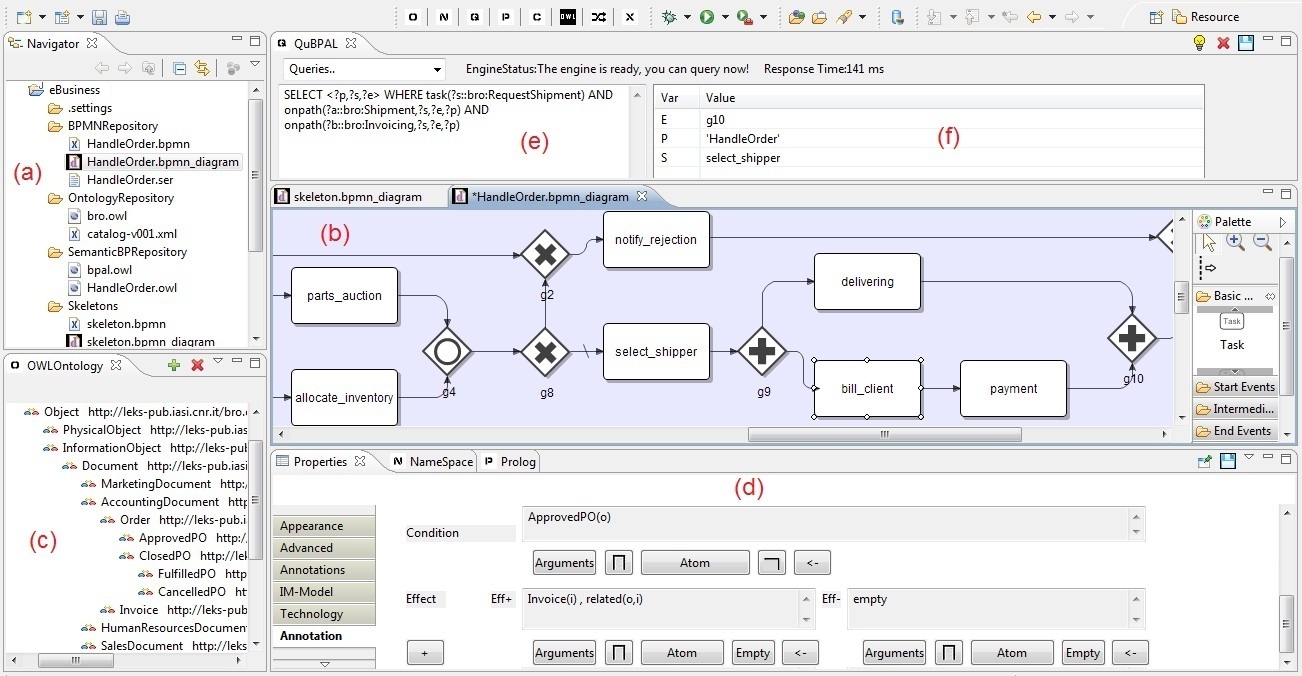}
	\caption{GUI of the BPAL platform}
	\label{fig:gui}
\end{figure}

\subsubsection{BPAL Reasoner} This component implements the reasoning methods described in Section \ref{section:reasoning}
by using the \textit{XSB Prolog}\footnote{\url{http://xsb.sourceforge.net/}} system \cite{xsb12}, which
is a Logic Programming system based on the SLG-resolution
inference strategy  recalled in Section \ref{subsec:lp}. 
As proved in Section \ref{sect:proof}, the tabling mechanism guarantees
the termination of query evaluation and the polynomial time
(in the size of the state space) verification of CTL properties.

Process schemas are imported into the BPKB from BPMN process models via
the \textit{BPMN2BPAL} interface. In order to ease the sharing and re-use of semantic meta-data, semantic information used and produced during the annotation process (i.e., reference ontologies and semantic annotations) can be exported and imported from OWL/RDF files by means of the \textit{RDF I/O} module. The underlying XSB Prolog implementation of the rule-based reasoner can deal  with either RDF, RDFS or OWL 2 RL ontologies. 
The \textit{BPKB Manager} handles the set-up and the interaction with the LP engine by  initializing and updating a BPKB. After populating the BPKB, inference is essentially performed by posing queries to the \textit{XSB Prolog} engine, connected through a Java/Prolog interface. To this end, the \textit{Query Manager} exposes functionalities to translate QuBPAL queries into LP queries, evaluate them, and collect the results in a textual form or export them in an XML serialization.

\subsection{Experimentation}
\label{sect:experimentation}
The approach has been applied to real-world scenarios coming from end-users involved in the European Project BIVEE\footnote{BIVEE: Business Innovation and Virtual Enterprise Environment (FoF-ICT-2011.7.3-285746)} and from the pilot conducted within a collaboration between the Italian CNR and SOGEI (ICT Company of the Italian  Ministry of Finance). The former is related to the modeling of production processes in manufacturing oriented networked-enterprises, while the latter regards the procedural modeling of legislative decrees in the tax domain. The experiments we have conducted are encouraging and revealed the practical usability of the tool and its acceptance by business experts. 

On a more technical side, the LP reasoner based on the XSB system shown a significant efficiency, since very sophisticated reasoning tasks have been performed on BPs of small-to-medium size (about one hundred of activities and several thousands of reachable states) in an acceptable amount of time and memory resources. Some empirical results are reported in the following, related to a dataset described in Table \ref{tab:datasets}.
We started by adapting a real world process, dealing with eProcurement, obtaining the BPS $P$, for which we report: the size, in terms of the number of flow elements; the number of reachable states; the number of exclusive, parallel, and inclusive gateways. As summarized in the table,  the considered BPS  does not contain logical errors (e.g., deadlocks) and is characterized by a considerable number of gateways, that is, branching/merging points (about 45\%  of the total number of elements). 
We then annotated in three different ways the process, obtaining $P_1,P_2,P_3$. For each one, in Table \ref{tab:datasets} we report: the number of reachable states; the coverage of the annotation, in terms of the  percentage of the annotated flow elements; the average size of each state, in terms of the number of ontological assertions (i.e., $t_f$ fluents) occurring in each state; the average size of the annotation, in terms of the number of $t_f$ fluents  occurring in the precondition/effect descriptions of the annotated flow elements; the errors exhibited by the BPS.  In particular, $P_1$ has been annotated without  preventing  logical errors induced by the annotation, $P_2$ presents a revised version of $P_1$ annotation, further extended in $P_3$.   

For the annotation of the BPS  we adapted an ontology covering documents and
production-related activities in the context of eProcurement and eBusiness, developed
within the BIVEE project, comprising about 100 concepts. 

\begin{table}[htbp!]

\begin{center}
\caption{Annotated processes used in the evaluation}
\label{tab:datasets}
\small
\begin{tabular}{|c|c|c|c|c|c|c|} \hline 
& \textit{Size} & \textit{States} & \textit{XOR} & \textit{PAR} & \textit{OR}& \textit{Errors}  \\ \hline 
\textbf{P} & 87 & 821 & 14 & 14 & 6 & No\\ \hline  
\end{tabular}
\begin{tabular}{|c|c|c|c|c|c|} \hline 
&  \textit{States} & \textit{Annotation} & \textit{Average}&  \textit{Average} & \textit{Errors}   \\  
&   & \textit{Coverage} & \textit{State Size}&  \textit{Annotation Size} &   \\  \hline

\textbf{P$_1$} & 944 & 35 \% & 7 & 3 & 2 non executable activities \\
&&&&& 150 inconsistent states \\ 
&&&&& 2 deadlocks \\ \hline 
\textbf{P$_2$} & 2172 & 70 \% & 11 &  5 & NO\\ \hline 
\textbf{P$_3$} & 3866 & 100 \% &  16 &  8 & NO\\ \hline 
\end{tabular}
\end{center}

\end{table}

The experiments have been performed on an Intel laptop, with a 3 GHz Core 4 CPU, 8 GB RAM and Windows operating system. For each BPS we first tested the set-up of the reasoner, which include the translation of the BPKB into LP rules, their loading into the XSB reasoner, and the computation of the state space, i.e., the transitive closure of the $result$ relation. Timing (measured in milliseconds) and memory occupation (measured in megabytes) are reported in Table \ref{tab:run-time}.  We then run the queries presented in Section \ref{section:verification} and the last presented in Section \ref{section:query}, representing respectively: the verification of the option to complete (Q1),  consistency condition (Q2), and executability (Q3) properties,  an exemplary compliance rule (Q4) and a retrieval query (Q5). For each query, the average timing obtained in 10 runs is reported.

\begin{table}[htbp!]
\begin{center}
\caption{Run-time phase evaluation}
\label{tab:run-time}
\small
\begin{tabular}{|c|c|c|c|c|c|c|c|} \hline 
 & \multicolumn{2}{|c|}{\textbf{\textit{State Space}}} & \multicolumn{5}{|c|}{\textbf{\textit{Query Evaluation}}}  \\ \hline
 & \textit{Time} & \textit{Memory } & \textit{Q1} & \textit{Q2} & \textit{Q3} & \textit{Q4} & \textit{Q5} \\ \hline 
 \textbf{$P$} & 265  & 35 & 60 & 100    &  60 & -  & -\\ \hline 
\textbf{$P_1$} & 1030  & 210 & 110 & 2710    & 110 & 50 & 30  \\ \hline 
\textbf{$P_2$} & 3300 &  670 & 530 & 4320  & 240   & 90 & 50\\ \hline 
\textbf{$P_3$} & 9720 & 1200 & 970 & 9250 & 405  & 105 & 60\\ \hline 
\end{tabular}
\end{center}
\end{table}

To better understand the performed tests, additional considerations are needed. Firstly, the above CTL queries have been executed \textit{after} the computation of the state space, which, due to the SLG-resolution strategy implemented by XSB, causes the population of the  \textit{tables} storing the intermediate results. The tables are then available in the subsequent queries, speeding up the computation. Secondly, to stress the engine, the evaluation of the performed queries requires the verification of ontology-based properties for each reachable state. Finally, the  amount of required memory depends on the strategy adopted by the engine for the management of the tables. In the above experiments the default behavior has been adopted   and, according to that, every intermediate result is materialized. This explains the large memory consumption, which, if needed, can be strongly reduced by introducing specific configurations to limit the use of tables, trading space for time. 

It is also worth noting that no code optimization has been performed, since the executed Prolog program is the direct translation of the  rules presented in this paper. Another remark regards the overhead introduced by the Java/Prolog bridge, which does not introduce a relevant performance degradation. 
Indeed, by running the same tests directly on XSB, without the Java infrastructure, the timings differ (up to a 10\%) only in the presence of a large amount of results, mainly due to the  inter-process data exchange. 

\section{Related Work}
\label{sect:rel_work}

\noindent \textbf{BP Modeling and Analysis.} Among several mathematical formalisms proposed for defining a
formal semantics of BP models, Petri nets \cite{vander-workflow} are the most used
paradigm to capture the execution semantics of graph-based
procedural languages (the BPMN case is discussed in \cite{petri-bpmn}). Petri net models enable  a large number of  techniques for the control flow analysis of processes, but they do not provide a suitable basis to represent and reason about additional domain knowledge.  In our framework we are able to capture the token game semantics underlying workflow models, and we can also declaratively  represent constructs, such as exception handling behavior or synchronization of active branches only (inclusive merge), which, due to their non-local semantics, are cumbersome to capture in standard Petri nets.
Furthermore, the logical grounding of our framework makes it easy to deal with the modeling of domain knowledge and the integration of  reasoning services.

Program analysis and verification techniques have been largely applied to the analysis of
process behavior, e.g., \cite{bpel-mc,compliance-ibm}. These papers are based on
the analysis of finite state models through model checking techniques \cite{clarke}, where
queries, formulated in some temporal logics, specify properties of process executions. However, these approaches are restricted to properties regarding the control flow only
(e.g., properties of the ordering, presence, or absence of tasks in process executions),
and severe limitations arise when ontology-related properties are included
as part of the model to be checked.

Other approaches based on Logic Programming that are worth mentioning are \cite{GiordanoMST13,kifer-rr-08,montali-vander}. \cite{GiordanoMST13} presents an approach to BP verification based on an extension of answer set programming  with temporal logic and constraints, where  the compliance of business rules is checked by bounded model checking techniques  extended with constraint solving for dealing with conditions on numeric data. \cite{kifer-rr-08,montali-vander} mainly focus on the analysis and on the enactment of flow models representing service choreographies, while we are not aware of  specific extensions that deal with the semantic annotation of procedural process models with respect to domain ontologies. 

\medskip

\noindent \textbf{Semantic Verification of BPs.} There is a growing body of contributions beyond pure control flow verification \cite{beyond-soundness,auditing-compliance,service-composition-semantics,semantic-mc}. In
\cite{beyond-soundness} the authors introduce the notion of Semantic Business Process
Validation, which  aims at verifying properties  related to the absence of logical errors which extend the notion of workflow soundness \cite{vander-workflow}. Validation is based on an execution semantics where token passing
control flow is combined with the AI notion of state change induced by domain-related
logical preconditions/effects. 
The main result is constituted by a validation  algorithm that runs in polynomial time in the size of the workflow graph, under some
restrictions on its structure and on the expressivity of the logic underlying the
domain axiomatization, i.e., binary Horn clauses.  This approach is  focused on providing efficient techniques for the
verification of specific properties, while the verification of arbitrary behavioral properties, such as the CTL formulae
allowed in our framework, is not addressed. Moreover, our language for annotations,
encompassing OWL 2 RL, is more expressive than binary Horn clauses. 
BP analysis techniques based on logical descriptions of effects of task execution are also proposed in   \cite{auditing-compliance,service-composition-semantics},  but they introduce algorithms in an informal way, since  a formal execution semantics is not provided, and a background ontology is not considered. 

In \cite{semantic-mc} the authors discuss a CTL model checking method for annotated state transition systems, encoding the procedural behavior of Web Services interactions. Given a query, in the form of a  CTL formula  containing conjunctive subqueries, a \textit{boolean} answer is computed in two steps: (1) a ground transition system is produced where each state contains all and only the description logic assertions \textit{relevant} to the input  query; (2)  the grounded model  is checked by a traditional propositional model checking algorithm. In contrast to our approach, the generation of the annotated transition system from a workflow model is neglected, and thus a semantics for activity preconditions/effects dealing also with the problems related to the state update is not given.  Furthermore, our framework allows much more expressive reasoning services, since it is not limited to the boolean verification of CTL queries. On the technical side, our approach avoids the burden of integrating several tools, since both the temporal and ontological reasoning are performed by the LP inference engine. One relevant advantage of the LP translation is the possibility of computing answers according to a pure top-down, goal-oriented strategy, which avoids the need of preliminary grounding the model and possibly performing a large number of inferences that are not necessary for answering  a given query.

Finally, we would like to mention a related research area, dealing with the verification of temporal properties in databases that evolve over time due to execution of actions operating on data (see \cite{data-bp-survey} for a survey). Recently, \cite{dejak-jair}  proposed \textit{Knowledge and Action Bases} (KABs),  where actions, encoded as  condition/action rules, modify the ABox of an ontology, encoded in a variant of the OWL 2 QL language. Under suitable restrictions, properties of KABs specified in the $\mu$-calculus are shown to be decidable, and their verification can be reduced to finite-state model checking. KABs describe systems that may reach an infinite number of states, unlike  our setting, where data are partially abstracted away\footnote{basically, the set of individuals in the ontology is bounded and fixed a-priori;  new values cannot be introduced during the enactment (e.g., by function terms)}, hence enforcing the reachable states to be a finite set.  However,  our framework is expressive enough to capture complex workflow specifications enriched with fluent expressions stated in terms of a background OWL 2 RL ontology. While the main goal of \cite{dejak-jair} is to provide theoretical results  that characterize the decidability and  (very high) complexity of KAB reasoning, our objective is more pragmatic and our formalization enables the implementation, through standard LP engines, of a wider set of (polynomial time) reasoning services, besides the verification of temporal properties. 
\medskip

\noindent \textbf{Process Ontologies.} The Process Specification Language (PSL) \cite{psl} is an ontology designed to formalize reasoning about processes in first-order logic. The basic structure that characterizes the behavior of a process in PSL is the \textit{occurrence tree} (whose model is inspired by the Situation Calculus \cite{reiter}), which contains all (infinite) sequences of occurrences of atomic activities starting from an initial state.  
Many extensions of PSL have been proposed to deal with time points,  objects, agents, and resources.  Although PSL is defined in first-order logic, which in principle makes behavioral specifications in PSL amenable to automated reasoning,  it is mostly intended as a means to facilitate correct and complete exchange of process information among manufacturing systems, rather then for computation. Indeed, it is  a very expressive framework whose associated reasoning tasks are intractable even for simple definitions, and undecidable  in general, due to the adoption of unrestricted first-order logic.  Furthermore, the systematic translation of procedural workflow descriptions into PSL has not been addressed, hence limiting its usability. 

Several papers proposed the extension to BP management of techniques developed in the context of the Semantic Web\footnote{See the work conducted within the SUPER project: \url{http://www.ip-super.org/}}. To this end several meta-model process ontologies have been proposed, with the aim of specifying in a declarative, formal, and explicit way the modeling constructs, and enabling the use of domain ontologies for the semantic reconciliation of model contents.  Some of them are derived from BP modeling notations (e.g.,  BPMN \cite{bpkb-iswc}), EPC \cite{sEPC},  XPDL \cite{oXPDL}, Petri nets \cite{oPetriNet}, while others have been designed in the context of interoperability, to overcome heterogeneities deriving from the adoption of different languages by mapping them to one common process ontology (e.g., GPO  \cite{gpo-thesis}, BPMO \cite{bpmo-tool}). The above approaches share some common features and goals: (1) they are based on standardized Web ontology languages; (2) they allow a machine-processable representation of BP models; (3) they enable query and search facilities; (4) they provide the means for relating  BP models to existing business dictionaries and background knowledge. While a BPAL BPKB provides all the above features, supporting OWL 2 RL for ontological modeling, it also integrates  behavioral modeling and a more expressive verification mechanism.

\medskip

\noindent \textbf{Semantic Web Services.} Another stream of related papers regards the semantic enrichment of Web Services, where relevant work  has been done within the OWL-S \cite{owl-s} and WSMO \cite{wsmo} initiatives. Both make an essential use of ontologies in order to facilitate the automation of discovering, combining and invoking electronic services over the Web. To this end they describe services  from two perspectives: from a \textit{functional} perspective a service is described in terms of its functionality, preconditions and effects, input and output; from a process \textit{perspective}, the service behavior is modeled as an orchestration of other services. However, in the above approaches the behavioral aspects are abstracted away,
thus hampering the availability of reasoning services related to the execution  of BPs.
To overcome such limitations, several solutions for the representation of service
compositions propose to translate the relevant aspects of the aforementioned service
ontologies into a more expressive language, such as first-order logic. 
Among them,  \cite{golog-owl-s} adopts the high-level agent
programming language Golog \cite{reiter}, \cite{swso,situation-calc-owl-s} rely on
Situation Calculus  variants, while \cite{fluent-ws-2010,fluent-ws-2013} are based on a direct translation of OWL-based service description into a Fluent Calculus theory. However, such approaches are mainly tailored to automated
service composition (i.e., finding a sequence of service invocations such that a given goal
is satisfied). Thus, the support provided for process definition, in terms of workflow
constructs, is very limited and they lack a clear mapping from standard
modeling notations. Furthermore, the adoption of a state-independent domain axiomatization (i.e., a DL TBOX) is not considered in the aforementioned approaches.  In contrast, our framework allows a much richer procedural description of processes, directly corresponding to BPMN diagrams.
Moreover, a reference ontology can  be used to ``enrich''
process descriptions by means of annotations written in OWL 2 RL,
one of the most widespread languages for ontology representation.

\section{Conclusions}

%

\subsection*{Summary}

In this paper we discussed a methodological framework and a technical solution  for the semantic enrichment of BP models, based on the synergic use of BPAL, a rule-based language adopted to provide a declarative representation of the procedural knowledge of a BP, and business ontologies, to capture the semantics of a business scenario. The resulting knowledge base provides a uniform and formal representation framework, suited for automated reasoning and equipped with a powerful inference mechanism supported by the programming systems developed in the area of Logic Programming. 

BPAL is a rule-based formalism for modeling the structure and the  behavior of a business process represented accordingly to a workflow perspective. It is essentially a process ontology, which provides a vocabulary, derived from BPMN, for specifying BPs,  and an explicit description of its meta-model and execution semantics  in terms of two core first-order logic theories which give formal definitions to the constructs of the language. In particular, from a control flow perspective, BPAL supports a  relevant fragment of the BPMN standard, allowing us to deal with a large class of process models.

We then proposed an approach for the semantic enrichment of BPs, where BPAL BP schemas are related through a semantic annotation to a  conceptualization of the business scenario formalized in a computational ontology. By integrating the rule-based ontology language OWL 2 RL with the structural and behavioral specification provided by BPAL, we are able to define a Business Process Knowledge Base (BPKB), as a collection of logical theories that provide a declarative representation of a repository of semantically enriched BPs. \\
\indent On top of this knowledge representation framework, we built a number of reasoning services which allow the user to  formulate complex queries that combine properties related to the structure, the behavioral semantics, and the ontological description of the BPs. We showed how advanced resolution strategies, such as the tabled resolution implemented in the XSB  Logic Programming system, guarantee a terminating, sound, and complete   evaluation of the queries that can be issued over a BPKB.  

\subsection*{Discussion}

The rule-based approach followed in our framework offers several advantages. First of all, it enables the
combination of the procedural and ontological perspectives in a very smooth and natural
way, thus providing a uniform framework for reasoning on properties that depend on the
sequence of operations that occur during process enactment and also on the domain where the process operates. Another advantage is the generality of the approach, which is open to further extensions,
since other knowledge representation applications can easily be integrated, by providing
a suitable translation to Logic Programming rules.

Furthermore, our approach does not introduce a new business process modeling language, but 
provides a framework where one can map and integrate knowledge
represented by means of existing formalisms. 
This is very important from a pragmatic point
of view, as one can express process-related knowledge by using standard modeling 
languages, while adding
extra reasoning services. We have adopted BPMN as a graphical modeling notation, and its XML linear form to import and manipulate BP models, possibly designed through external BP Management Systems. For what concerns the ontology representation, we have committed to OWL, the current de-facto standard for ontology modeling and meta-data exchange. In essence, we have proposed a progressive approach, tailored to enhanced adaptability, where a business expert can start with the (commercial) tool and notation of his/her choice, and then enrich its functionalities with the formal framework we provide.

Finally, since our rule-based representation can be
directly mapped to a class of logic programs, we can use standard Logic Programming
systems to perform reasoning tasks such as verification and querying through  a goal-oriented, efficient sound and complete evaluation procedure.

There are two main assumptions related to the practical applicability of our approach: the availability of ontologies and the willingness of an organization to describe their processes with semantic information. Clearly,  enabling additional reasoning services comes at the price of additional modeling efforts,   which may seriously hamper  the adoption of our solution; this is  a problem shared by many approaches based on Knowledge Representation techniques, in the Semantic Web related-research in particular. We now briefly discuss the impact of the above issues on the proposed approach.

The development of an ontology is a very complex task  that requires the expertise of knowledge engineers and domain experts, and hence, high costs.  Nevertheless, industrial products and services categorization standards, such as RosettaNet (\url{http://www.rosettanet.org/}) or eClass (\url{http://www.eclass-online.com/}), and  libraries of standard  business documents, such as UBL (\url{http://ubl.xml.org/}),  reflect some degree of community consensus, and can thus be valuable input for creating business domain ontologies \cite{eClassOwl}. Also the growing interest for  the publication of open data and their organization according to the Linked Data paradigm\footnote{http://linkeddata.org/} increase the availability of publicly accessible terminological resources.  Moreover, emerging methodologies for collaborative ontology building   may be adopted here to lift existing resources (e.g., glossaries, organizational and data models) into formal theories \cite{cts2013}. That said, it should be noticed that  our framework does not require a heavy-weight, richly axiomatized ontology to work.  The query capabilities can be still exploited  even in the presence of a thesaurus only, which defines a set of terms whose meaning is agreed upon, possibly arranged in hierarchical structures. In this case, the annotation is  reduced to tags taken from such a common glossary, but still retrieval and verification tasks with a practical relevance can  be performed. 

Also the semantic annotation is a time-consuming and error-prone task, which  does not pay off if a small number of BPs has to be managed. However, in situations where hundreds of process models are available within an organization, and many collaborations with other departments or companies take place, the alignment of the adopted terminology and the reasoning facilities enabled by the semantic annotation may create a significant added-value\footnote{See, e.g., the EU projects SUPER  (http://www.ip-super.org/), Plug-it (http://plug-it-project.eu), COIN (http://www.coin-ip.eu/) and BIVEE (http://www.bivee.eu/).}. Furthermore,  once the ontologies are available, the effort required to the user for creating annotations amounts to browsing and selecting ontology concepts (see Section \ref{sect:tool}). In addition, we do not require that every BP is fully annotated; in many situations only parts of the model may be of interest for specific querying or verification tasks. Finally,  approaches based on information retrieval and linguistic analysis can also be applied to support the annotation, suggesting correspondences between activity labels and terms defined in an ontology \cite{auto-ann2}. 

\subsection*{Future Work}

The results presented in this paper leave  several directions open for future research. First of all, we plan  to push forward the empirical investigation of the impact of our proposal in each application scenario we are addressing, as reported in Section \ref{sect:experimentation}.

On the technical level, a relevant aspect to be further elaborated  regards  the adoption of query optimization techniques to enhance the reasoning approach. As it stands, the reasoner    performs only simple optimizations based on the  re-ordering of literals, and  all the queries are evaluated with a pure goal-oriented, top-down approach, without any pre-processing of the knowledge base. We are confident that the query evaluation process can be strongly improved through more sophisticated query rewriting and program transformation techniques \cite{program-transformation}, which have been largely investigated in the area of  Logic Programming.   

We are also interested in applying the proposed framework  in other phases of the BP life-cycle. In particular, the trace semantics of BPAL appears a suitable starting point to support: (i) querying at run-time, i.e., performed over a running instance of the process during its enactment; (ii) a-posteriori, i.e., over the execution logs  of completed enactments,  by adopting Inductive Logic Programming techniques, such as the ones presented in \cite{miningILP}; (iii) verification techniques for BPs in the presence of data constraints, by following approaches based on Constraint Logic Programming such as, for instance, the one proposed in \cite{BPconstraintsCLP}.

Finally, we plan to extend the framework to also represent  the execution-level process knowledge, and support the transition between conceptual and executable processes  from a service-oriented perspective. That is, given a conceptual process model, Web services available in a repository are  selected  and possibly orchestrated to implement the process activities. The query-based support to process composition discussed in \cite{iesa} represents a first contribution in that direction.




\bibliographystyle{abbrv}
\bibliography{main-bib}

\end{document}